%% file: neurips_2026.tex
\documentclass{article}

\PassOptionsToPackage{numbers, compress}{natbib}
\usepackage[preprint]{neurips_2026}

\usepackage[utf8]{inputenc}
\usepackage[T1]{fontenc}
\usepackage{hyperref}
\usepackage{url}
\usepackage{booktabs}
\usepackage{amsfonts}
\usepackage{amsmath}
\usepackage{amssymb}
\usepackage{nicefrac}
\usepackage{microtype}
\usepackage{xcolor}
\usepackage{xspace}
\usepackage{amsthm}
\usepackage{wrapfig}
\usepackage{enumitem}

\usepackage{tabularx}
\usepackage{booktabs}
\usepackage{multirow}
\usepackage{wrapfig}
\hypersetup{
    hidelinks
}

\theoremstyle{plain}
\newtheorem{theorem}{Theorem}
\theoremstyle{definition}
\newtheorem{proposition}[theorem]{Proposition}
\newtheorem{assumption}[theorem]{Assumption}
\theoremstyle{remark}
\newtheorem{remark}[theorem]{Remark}
\usepackage{pgfplots}
\usepgfplotslibrary{groupplots}
\pgfplotsset{compat=1.18}
\usepackage{twemojis}

\input{init.tex}

\title{Multi-Turn Reflective Masking Elicits Reasoning in Mask Diffusion Models}

\author{
  Yanming Zhang\textsuperscript{1}\thanks{Equal contribution.}\quad
  Yihan Bian\textsuperscript{1}\footnotemark[\value{footnote}]\quad
  Jingyuan Qi\textsuperscript{2}\quad
  Yuguang Yao\textsuperscript{3}\\[0.3em]
  \textbf{Lifu Huang}\textsuperscript{4}\quad
  \textbf{Tianyi Zhou}\textsuperscript{5}\\[0.4em]
  \textsuperscript{1}University of Maryland\quad
  \textsuperscript{2}Virginia Tech\quad
  \textsuperscript{3}Intuit\quad
  \textsuperscript{4}UC Davis\quad
  \textsuperscript{5}MBZUAI\\[0.3em]
  \href{https://zhangyanming-cs.github.io/Multi-Turn_RM/}{\twemoji{globe with meridians}\ Project Page}
}

\begin{document}

\maketitle

\input{body/abstract.tex}

\input{body/introduction.tex}

\input{body/related_work.tex}

\input{body/method.tex}

\input{body/experiments.tex}

\input{body/limitations.tex}

\input{body/conclusion.tex}

\bibliographystyle{plainnat}
\bibliography{references}

\input{body/appendix.tex}

\end{document}

%% file: init.tex
\newcommand{\methodfull}{Reflective Masking\xspace}
\newcommand{\method}{RM\xspace}
\newcommand{\temporalmethodfull}{History Reference\xspace}
\newcommand{\temporalmethod}{HR\xspace}
\newcommand{\distancemethodfull}{History Embedding Rotation\xspace}
\newcommand{\distancemethod}{HER\xspace}

\newcommand{\xstar}{x^{\ast}}
\newcommand{\mask}{\texttt{MASK}}

\newcommand{\Ltrain}{\mathcal{L}_{\mathrm{train}}}

\newcommand{\TV}[2]{\mathrm{TV}\!\left(#1,\,#2\right)}

%% file: body/abstract.tex
\begin{abstract}

While reasoning on autoregressive (AR) models is often performed by chain-of-thought reasoning and reflection, their refinement of previous outputs still relies on fully sequential generation, even when only local edits are needed. In contrast, the masking mechanism in Mask Diffusion Models (MDMs) naturally supports explicit local edits on previous outputs, allowing selective refinement without discarding previous answers and generating another from scratch. While this property more closely aligns with how humans correct mistakes by iterative local refinement, existing MDMs do not support multi-turn masking and denoising. We propose \textbf{\methodfull} (\method), which elicits such an intrinsic reasoning capability in MDMs via lightweight post-training. \method provides a native test-time scaling, where an MDM iteratively revisits and revises its prior outputs based on evolving context. To exploit insights from previous turns like AR reasoning, we further introduce \textbf{History Reference}, a parameter-free mechanism that leverages intermediate denoising states during revision. Our approach requires no architectural changes and is easily applicable to existing MDMs. Across diverse tasks and modalities, including text generation, Sudoku, and image editing, \textbf{\methodfull} consistently outperforms standard masking-based baselines and demonstrates strong generality, positioning \method as a fundamental primitive for reasoning on MDMs.

\end{abstract}

%% file: body/introduction.tex
\section{Introduction}
\label{sec:introduction}

While Large Language Models (LLMs) have been widely accepted for performing various reasoning tasks~\citep{kojima2022large,wang2022self,lightman2023let,wei2022chain}, recent studies have revealed that they still struggle in multi-turn and long-horizon settings~\citep{laban2025llms}: models tend to propagate previous errors and incorrect assumptions rather than revise them. A key limitation of their autoregressive (AR) paradigm is that local errors require regenerating the entire sequence, incurring unnecessary computation. Moreover, incorrect intermediate content persists in the context, occupying valuable capacity and contaminating subsequent reasoning.

In contrast, Mask Diffusion Models (MDMs)~\citep{austin2021structured, sahoo2024simple, nie2025large, xin2025lumina} provide a fundamentally different generation mechanism that naturally supports localized revision. Through iterative masked updates, MDMs can keep the current context fixed while resampling uncertain tokens, avoiding the need to regenerate the entire sequence. This native remasking mechanism reveals a potential advantage of MDMs for reasoning: if local errors can be revised in place, the model could maintain a cleaner intermediate context, reduce error contamination, and save computation. In this sense, MDMs offer a possible path toward localized self-correction that is difficult to realize in AR generation. 

However, realizing this potential requires moving beyond the passive remasking behavior of existing MDM decoding. Standard MDM generation still follows an absorbing Markovian denoising process: once a token is confidently denoised, it is fixed. Existing remasking strategies focus on re-masking low-confidence tokens, but the model is unable to actively revisit and correct previously committed predictions. We argue that for MDMs to support reasoning-level self-correction and provide a distinct form of test-time scaling from AR models, the model must learn to identify unreliable predictions and actively revise them during generation. We propose \textbf{\methodfull}, where masking is formulated as an internal decision process driven by the model's uncertainty, enabling it to selectively refine already denoised tokens during generation.

In this work, we activate \textbf{\methodfull as a first-class capability} of MDMs by making it a central focus of post-training. We introduce a lightweight training paradigm that enables \textbf{self-initiated, context-aware revision} without architectural modifications, together with an effective data generation strategy that produces stable training signals aligned with the model’s native output distribution. Our results show that this latent capability can be reliably unlocked through appropriate training. More broadly, reflective masking establishes a form of test-time scaling unique to MDMs, where additional computation is allocated to selective revision rather than forward expansion. We argue that reasoning in MDMs should be viewed not as forward generation, but as iterative state refinement through explicit self-correction and revision.

With multi-turn reflective masking, a key capability still missing relative to autoregressive reasoning is the ability to extract insights from the growing context of historical generations. To bridge this gap, we introduce \textbf{History Reference}, a parameter-free mechanism that enables MDMs to maintain a stateful view of their denoising trajectory by preserving intermediate decoding states. Unlike AR models, which encode history implicitly in the input context, History Reference allows MDMs to explicitly leverage past predictions. This mechanism introduces a temporal dimension orthogonal to the current text context, guiding reflective updates and improving consistency while reducing repeated errors during iterative refinement. Importantly, History Reference requires no additional learnable parameters or external memory, making it efficient and readily applicable to existing MDMs.

We evaluate our approach across a spectrum of generation tasks with varying levels of guidance, demonstrating consistent improvements and strong generality. We begin with image editing, where rich instructions specify both where and how to modify the input. We then consider Sudoku, a structured reasoning task that requires the model to identify and correct erroneous entries. Finally, we study text generation tasks, where supervision is minimal and no direct hints about the final answer are provided. Across all three settings, \textbf{\methodfull} consistently improves performance. For tasks that require autonomous exploration, \textbf{History Reference} proves particularly effective in guiding the model’s reasoning and iterative revision. Together, these results suggest that reflective masking provides a general mechanism for enabling reasoning through explicit revision in mask-based generation. \looseness-1

Our contributions can be summarized as follows:
\begin{itemize}[leftmargin=*]
    \item We identify \textbf{\methodfull} as a latent capability of mask diffusion models, and propose a new perspective that frames generation as an iterative process of self-initiated revision, introducing a new dimension of reasoning based on explicit modification of prior outputs.

    \item We propose a lightweight training paradigm, associated with an effective and scalable data pipeline, to activate \methodfull as an intrinsic skill, enabling context-aware and adaptive revision without architectural changes. We demonstrate its effectiveness across diverse downstream tasks and modalities, including text generation, Sudoku, and image editing, highlighting strong generality.

    \item We introduce \textbf{History Reference}, a parameter-free mechanism that enables MDMs to maintain a stateful view of their denoising trajectory by incorporating intermediate decoding history. It improves revision consistency and avoids repeating past errors during iterative refinement.\looseness-1
\end{itemize}

%% file: body/related_work.tex
\section{Related Work}
\label{sec:related-work}

\textbf{Mask diffusion models.}
MDMs~\citep{nie2025large,ye2025dream,lou2023discrete} generate sequences by iteratively denoising masked inputs~\citep{ghazvininejad2019mask,chang2022maskgit,li2022diffusion,gong2024scaling}. Lumina-DiMOO~\citep{xin2025lumina} extends this to a multimodal setting. While these formulations naturally allow tokens to be revisited through masking~\citep{gong2025diffucoder,xie2025dream}, existing approaches primarily focus on one-shot denoising and do not exploit this capability for iterative revision.

\textbf{Editing and revision in autoregressive models.}
Autoregressive models have also been extended to support revision and editing of generated content. Prior work explores editing capabilities by inserting special tokens or performing span-level regeneration, such as insertion-based generation~\citep{stern2019insertion,gu2019levenshtein}, edit-based modeling~\citep{guu2018generating}, and controllable text editing frameworks~\citep{malmi2019encode,mallinson2020felix,mallinson2022edit5,faltings2021text}. However, such mechanisms are not native to autoregressive generation, which fundamentally operates in a forward-only manner. As a result, revising prior outputs typically requires regenerating entire sequences, introducing additional decoding passes. This makes editing indirect and often inefficient, as the model cannot modify earlier decisions in place. In contrast, mask diffusion models naturally support in-place modification through masking, enabling direct and localized revision of generated content. This structural difference provides a more suitable foundation for iterative self-correction, where previous predictions can be selectively revisited and refined without restarting the generation process.

\textbf{RemeDi and re-masking approaches.}
RemeDi~\citep{huang2025don} introduces self-reflective re-masking with a dual-stream architecture and achieves strong performance on from-scratch text generation, highlighting the importance of revising intermediate predictions in mask diffusion language models. However, RemeDi treats re-masking as an additional capability that must be explicitly learned through architectural modifications and auxiliary training objectives. This design increases both training and inference complexity, and limits its adaptability to existing MDM frameworks.

Other works explore variants of re-masking, including mixed noise schedules~\citep{von2025generalized}, predictor--corrector strategies~\citep{wang2025remasking}, and per-step resampling schemes~\citep{song2025seed}. These approaches improve sampling efficiency or stability, but continue to treat masking as an externally driven procedure rather than an intrinsic model behavior. In contrast, we view masking as a native capability of MDMs that can be activated rather than newly introduced, reframing revision as an internal decision process rather than an externally imposed operation.

%% file: body/method.tex
\section{\methodfull}
\label{sec:method}

\methodfull (\method) treats every position at every denoising step as a per-position decision: \emph{keep} the current token, \emph{re-mask} it to \mask\ for re-prediction, or, if currently masked, \emph{reveal} a new token. We present \method\ in the following parts: the inference decision rule that maps model probabilities to per-position keep / re-mask / reveal actions (\S\ref{sec:inference}); 
\temporalmethodfull (\temporalmethod), a parameter-free per-position history aggregation mechanism that summarizes the denoising trajectory into a history-aware embedding for the model (\S\ref{sec:temporal-residual});
the training objective that enables models' \methodfull capability and the recipe used to construct training inputs (\S\ref{sec:training}).

\textbf{Notations.} Let $\xstar \in \mathcal{V}^N$ denote the target sequence and $E \subseteq \{1, \ldots, N\}$ the set of editable positions  (full sequence excluding the instruction prompt). For each $i \in E$, the position can take one of three values throughout the denoising trajectory:
$\tilde{x}^{(t)}_i \;\in\; \{w_i, \mask, \xstar_i\}$
where $w_i$ is a task-specific wrong token. Non-edit positions $i \notin E$ stay at $\xstar_i$ at every step. We write $T$ for the total number of denoising steps and $\bar{\mathcal{V}} := \mathcal{V} \cup \{\mask\}$ for the extended vocabulary.

\subsection{Basic inference rules on MDMs with \methodfull enabled}
\label{sec:inference}

Unlike the standard MDM inference rule, which follows an absorbing Markov process, we introduce \methodfull that allows the model to revisit and revise past decisions, enabling test-time scaling. To enable this, we define a new inference rule. At timestep $t \in \{0, 1, \ldots, T-1\}$ the model takes the current state $\tilde{x}^{(t)}$ as input and outputs a per-position categorical $p_\theta(\cdot \mid \tilde{x}^{(t)})_i$ over $\bar{\mathcal{V}}$. The next-step state is then determined per position by a deterministic rule that splits on whether the position is currently non-mask or masked. $\mask$ is denoted as M:
\begin{equation}
\tilde{x}^{(t+1)}_i =
\begin{cases}
M
& \text{if } \tilde{x}^{(t)}_i \neq M \text{ and } p_\theta(M \mid \tilde{x}^{(t)})_i > p_\theta(\tilde{x}^{(t)}_i \mid \tilde{x}^{(t)})_i \; \textit{\text{(\method),}} \\[4pt]
\tilde{x}^{(t)}_i
& \text{if } \tilde{x}^{(t)}_i \neq M \text{ and } p_\theta(M \mid \tilde{x}^{(t)})_i \leq p_\theta(\tilde{x}^{(t)}_i \mid \tilde{x}^{(t)})_i \; \textit{\text{(Keep),}} \\[4pt]
\displaystyle \arg\max_{v \in \mathcal{V}}\, p_\theta(v \mid \tilde{x}^{(t)})_i
& \text{if } \tilde{x}^{(t)}_i = M \; \textit{\text{(Reveal).}}
\end{cases}
\label{eq:remask-rule}
\end{equation}

The rule is intuitive. At a non-mask position, re-mask whenever the model assigns higher probability to \mask\ than to the current token, because the model is signaling that the non-mask token is wrong; otherwise keep it. At a \mask\ position, reveal the most likely vocabulary token. In general, at each iteration, model can produce one of three actions per position: keep, re-mask, or reveal, all driven entirely by the model’s per-position output distribution.

This basic rule as presented conditions only on the current state $\tilde{x}^{(t)}$. As a result, the model may produce the same state across different time steps, leading to a loop where identical states are repeatedly passed to subsequent steps. We address this in \S\ref{sec:temporal-residual} by changing what the model sees as input, while leaving Eq.~\eqref{eq:remask-rule} unchanged.
\subsection{Enhancing \methodfull with \temporalmethodfull }
\label{sec:temporal-residual}

\input{figures/inferece}

To let the model condition on more than the current state, we maintain a per-position accumulated embedding that compresses the prefix $\tilde{x}_i^{(0:t)}$ into a single vector and feed it to the model. The accumulated embedding adds no learnable parameters and admits an $O(1)$ per-step update (Appendix~\ref{app:rope-engineering}).

\paragraph{Setup.}
Let $e_i^{(k)} := \mathrm{wte}(\tilde{x}_i^{(k)})$ denote the model token embedding of position $i$ at step $k$. Let $R_\Delta$ denote the \emph{\distancemethodfull (\distancemethod)} indexed by the distance $\Delta = k - t$ between a historical step $k$ and the current step $t$, satisfying the standard rotation composition rules
\[
R_0 = I, \qquad R_a R_b = R_{a+b}, \qquad R_{-\Delta} = R_\Delta^{\top}.
\]
$R_\Delta$ is a parameter-free block-diagonal rotation that applies a distance to each two-dimensional block of an embedding. We instantiate it with the same two-dimensional sinusoidal blocks used by standard rotary encodings~\citep{su2024roformer}.

\paragraph{History embedding.}
At iteration $t$ we express every historical state in the current step's reference frame and accumulate them into a single vector:
\begin{equation}
a_i^{(t)} \;=\; \sum_{k=0}^{t} \gamma^{\,t-k}\, R_{k-t}\, e_i^{(k)},
\label{eq:temporal-embedding}
\end{equation}
where $\gamma \in (0, 1]$ is a history-decay factor. Equivalently,
\[
a_i^{(t)} \;=\; e_i^{(t)} + \gamma\, R_{-1}\, e_i^{(t-1)} + \gamma^{2}\, R_{-2}\, e_i^{(t-2)} + \cdots + \gamma^{t}\, R_{-t}\, e_i^{(0)}.
\]
The current state contributes unrotated, while each past state is added with a lag-dependent rotation \(R_\Delta\) and decay \(\gamma^{|\Delta|}\). Two sequences that share the same current state are augmented by their histories, which provide additional conditioning signals. This historical context allows the model to reference past states when deciding whether to re-mask or keep a token, helping it avoid recurring errors and leverage useful cues from earlier versions.

\subsection{Training paradigm for enabling \methodfull in MDMs}
\label{sec:training}

To enable the model to follow the inference rule of \S\ref{sec:inference}--\S\ref{sec:temporal-residual}, we design a training paradigm that aligns its per-position outputs with the correct next-step actions. Concretely, we train the model with per-position oracle labels, encouraging its output distribution to assign the highest probability to the desired next action at each position.

\paragraph{Oracle revision rule.}
For each $i \in E$ and any state $z_i \in \{w_i, \mask, \xstar_i\}$, the optimal next-step action is the deterministic function
\begin{equation}
\tau(z_i, \xstar_i) \;=\; \begin{cases}
\mask & z_i \in \mathcal{V} \setminus \{\xstar_i\} \quad \text{(re-mask wrong / initial token)} \\
\xstar_i & z_i = \mask \quad \text{(reveal target)} \\
\xstar_i & z_i = \xstar_i \quad \text{(preserve target).}
\end{cases}
\label{eq:oracle-tau-method}
\end{equation}
A non-mask token that differs from the target should be re-masked, a \mask\ should be revealed to the target, and a correct non-mask token should be preserved. Non-edit positions are deterministically labelled $\xstar_i$ at every step.

\paragraph{Training data curation.}
\input{figures/data_synthetic_pipe}
As illustrated in Fig.~\ref{fig:data_synthetic_pipe}, a training instance is constructed by simulating a synthetic trajectory, starting from an original clean sequence $\xstar$. We first sample a subset of positions to corrupt and draw a timestep $t \sim \mathrm{Uniform}\{0, \ldots, T-1\}$. At this timestep, the selected positions are split into two groups: one is replaced with \mask\ tokens, while the other is substituted with wrong tokens. Wrong tokens are sampled from a corruption distribution $\nu(\cdot \mid \xstar_i)$ over $\mathcal{V} \setminus \{\xstar_i\}$. In practice, $\nu$ is chosen to better match inference-time errors, e.g., using top-$k$ predictions from a frozen MDM backbone (excluding the ground-truth token) or task-specific token distributions. Appendix~\ref{app:topk-tv-gap} bounds the resulting shift of the training objective relative to a uniform proposal. This yields the current state $z = \tilde{x}^{(t)}$.

To approximate inference-time dynamics, we construct histories following position-wise transition rules (right side of Fig.~\ref{fig:data_synthetic_pipe}). Each position evolves independently according to its current state: correct tokens remain unchanged; masked tokens transition to correct tokens at a sampled timestep $t_1$; and wrong tokens first transition to mask tokens at sampled timestep $t_1$ and then to correct tokens at later timestep $t_2$. These transitions are governed by per-position rules together with sampled transition timesteps within the $T$-step horizon, which jointly define the trajectory sampler.

Sampling these transition timesteps produces a set of reference states that mimic iterative refinement during inference. Based on the resulting state, training targets are defined per position: masked tokens are trained to predict the correct token, while wrong tokens are trained to predict the mask token. The overall objective combines reveal, mask, and keep losses. This construction defines a trajectory sampler that specifies the training input distribution, while remaining independent of model parameters $\theta$, providing a stable target for optimization. The full procedure is given in Appendix~\ref{app:training-algo}.

\paragraph{Training objective.}
The model receives $a^{(t)}$ and outputs per-position categorical distributions. We supervise each position using an oracle action $\tau(z_i, \xstar_i)$ that depends on its current state: masked tokens are mapped to their correct token (reveal), wrong tokens are mapped to \mask\ (mask), and correct tokens are kept unchanged (keep).

We minimize the per-position cross-entropy against this oracle,
\begin{equation}
\Ltrain(\theta) \;=\; \mathbb{E}\!\left[\,\sum_{i \in E} -\log p_\theta\!\left(\tau(z_i, \xstar_i) \,\big|\, a^{(t)}\right)_i\right],
\label{eq:total-loss}
\end{equation}
where the expectation is over the data distribution, the random trajectory and step sampling.

Equivalently, the objective decomposes into three components:
a \textit{reveal loss}, which predicts the correct token for masked positions;
a \textit{mask loss}, which predicts \mask\ for wrong positions;
and a \textit{keep loss}, which preserves correct tokens.
This decomposition matches the training targets induced by the synthetic trajectory construction in Fig.~\ref{fig:data_synthetic_pipe}.

Two properties of this objective justify its use. First, given a sufficiently expressive model, minimizing Eq.~\eqref{eq:total-loss} drives the model's per-input output toward the conditional distribution of the oracle action $\tau$, so the argmax of the trained model recovers the inference rule of Eq.~\eqref{eq:remask-rule}. Second, conditioning on $a^{(t)}$ instead of the bare current state $\tilde{x}^{(t)}$ cannot increase the optimal training risk, since a richer input representation can only improve (or leave unchanged) the best achievable expected loss. Please refer to Appendix~\ref{app:theory} for the complete theoretical justification and detailed proofs.

%% file: figures/inferece.tex
\begin{wrapfigure}[16]{r}{0.6\linewidth}
\centering
\includegraphics[width=0.9\linewidth]{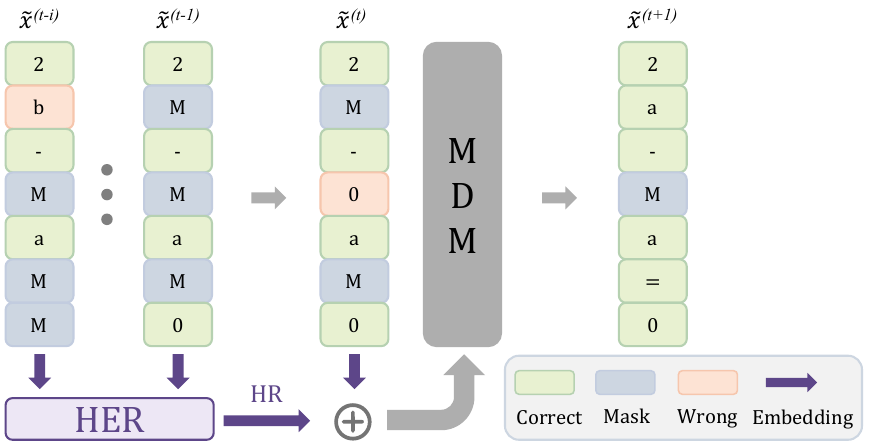}
\caption{\textbf{Inference procedure with \temporalmethodfull.} All states are embedded, and the historical states are further processed by HER. The embedding of current step $\tilde{x}^{(t)}$ is added to the history reference (HR) and fed into the model as the input, which predicts the next step $\tilde{x}^{(t+1)}$.}
\label{fig:inference-dataflow}
\end{wrapfigure}

%% file: figures/data_synthetic_pipe.tex
\begin{figure}[t]
    \centering
    \includegraphics[width=\linewidth]{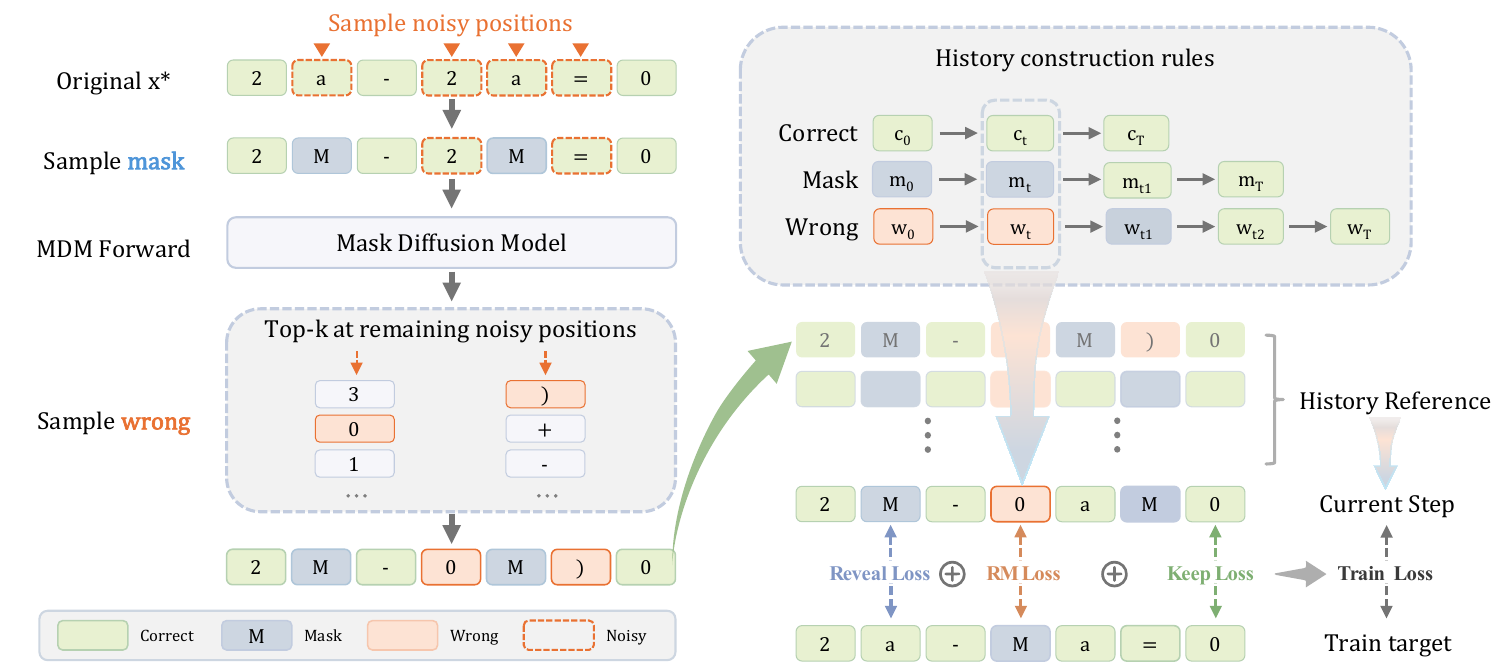}
    \caption{\textbf{Synthetic history data construction for Mask Diffusion Model Training}.
    A noisy sequence is created from a clean sequence using mask and wrong-token corruption. Position-wise transition rules are then used to sample synthetic histories and define training targets.}
    \label{fig:data_synthetic_pipe}
\end{figure}

%% file: body/experiments.tex
\section{Experiments}
\label{sec:experiments}

We evaluate our approach across three representative task categories that differ in the level of external guidance and the degree of required autonomous reasoning. These tasks are designed to systematically probe how \methodfull and \temporalmethodfull contribute under varying conditions of supervision and exploration. At one end of the spectrum, we consider image editing, where rich instructions directly specify both the target regions and desired modifications. We then study Sudoku, a structured reasoning task where the model must identify and correct erroneous entries with limited guidance. Finally, we evaluate on text reasoning tasks such as mathematical reasoning and code synthesis, where no explicit hints about the final output are provided. This progression enables us to examine the roles of \method and \temporalmethod, and to understand how they interact across tasks with increasing reasoning complexity and decreasing supervision.

The training on these three tasks can be completed within about \textbf{5 hours} on 2 NVIDIA H100 80GB GPUs, avoiding the \textbf{days or weeks} long training often associated with prior architecture-changed approaches.

\subsection{Instruction-based in-place image editing}
\label{sec:image-editing-experiments}

We begin with image editing~\citep{brooks2023instructpix2pix,zhang2023magicbrush,sheynin2024emu,hertz2022prompt}, a task characterized by strong external guidance from natural language instructions. Given an input image and an editing instruction, the model is required to localize the regions to be modified and generate corresponding updates that align with the semantic intent, while preserving the rest of the image. In this setting, the instruction provides explicit cues for both where and how to edit, leaving limited need for autonomous exploration. As a result, this task primarily evaluates whether \methodfull can enable precise, localized revisions based solely on instruction signals, without relying on iterative self-discovery or history-based reasoning.

\textbf{Experiment settings.}
We use Lumina-DiMOO (Lumina)~\citep{xin2025lumina} as the base model. For training and evaluation, we sample 85,000 examples from ImgEdit~\citep{ye2025imgedit} as the training set and hold out another 1,700 examples as the test set. To ensure a fair comparison, our method and the vanilla SFT baseline are trained under the same experimental setup.

\input{tables/formal/image.tex}

\input{figures/Image_main_paper}

\textbf{Results.}
Fig.~\ref{fig:image_edit_main_paper} and Tab.~\ref{tab:img-editing-main-results} present the qualitative and quantitative results on image editing.
As shown in Fig.~\ref{fig:image_edit_main_paper}, our method accurately localizes the regions that require editing and masks them for subsequent generation.
This provides the MDM with a cleaner current state, leading to higher-quality edits than the baselines.
Moreover, since our method only regenerates the masked regions, the unmasked areas remain well aligned with the input image.
This is further demonstrated by the heat maps in Fig.~\ref{fig:image_edit_main_paper}, where our changes are concentrated in the target edit regions, while the baselines introduce more globally distributed editing noise, leading to noticeable consistency issues that distort unaffected regions and even corrupt fine-grained details.

For quantitative evaluation, we consider three aspects of image editing quality: localization accuracy, background preservation, and overall editing quality.
For localization, we report Edit \textit{Precision}, which measures the percentage of changed pixels that fall inside the ground-truth target edit region, while Edit \textit{Coverage} measures the fraction of the target region that is modified.
For background preservation, we use MAE-RGB~\citep{willmott2005advantages}, PSNR~\citep{huynh2008scope}, and SSIM~\citep{wang2004image} on the non-target regions between the edited and the input image.
These metrics evaluate whether regions unrelated to the instruction remain unchanged.
For overall editing quality, we use VQAScore~\citep{lin2024evaluating} to evaluate instruction following and content preservation, together with a 29-participant user study to measure human preference.
As shown in Tab.~\ref{tab:img-editing-main-results}, our method outperforms baselines across all metrics.

\subsection{Sudoku revision}
\label{sec:sudoku-experiments}

We next consider Sudoku~\citep{palm2018recurrent,wang2019satnet}, a structured reasoning task that requires the model to detect and correct errors in a partially incorrect grid. The input consists of Sudoku boards containing invalid entries, and the model must iteratively refine its predictions to produce a valid solution. Unlike image editing, this task requires the model to actively identify inconsistencies and explore possible corrections. At the same time, the search space is highly constrained due to the limited vocabulary and strict structural rules, making Sudoku a controlled environment for studying iterative reasoning and revision behaviors. This setting provides a challenging testbed for \methodfull and allows us to evaluate how \temporalmethodfull facilitates effective failure avoidance in the absence of explicit instructions about where errors occur or how revisions should be performed, thereby highlighting the importance of the \temporalmethod mechanism for autonomous error localization and refinement.

\input{tables/formal/sudoku_ablation_components}

\textbf{Experiment settings.}
We construct a lightweight MDM with four Transformer layers and 0.81M parameters. We construct test examples from solved $9\times9$ boards by corrupting a specified number of cells (randomly chosen from 4 to 20), replacing the original digits with incorrect values. The model is tasked with correcting the corrupted board through iterative re-masking and re-prediction.

\textbf{Results.}
We evaluate four complementary metrics.
\textit{Exact Accuracy} requires every cell in the final board to exactly match the ground-truth solution, while \textit{Valid Rate} only requires the final board to satisfy Sudoku constraints.
We include \textit{Exact Accuracy} because our setting allows the model to re-mask and revise any position in the initial board. A low \textit{Exact Accuracy} score would indicate that the model tends to discard key given clues and regenerate a generic valid solution. Taken together, these two metrics provide a more comprehensive evaluation of the correction capability of the RM method.
\textit{Replay Mistake} measures the fraction of re-masked erroneous cells that are later decoded back to their previous incorrect digit during the revision trajectory, and \textit{Conflict Cells} reports the average number of Sudoku-rule-violating cells in each final board.

As shown in Tab.~\ref{tab:sudoku-ablation-components}, adding \temporalmethod greatly reduces repeated mistakes and constraint conflicts compared with the variant without \temporalmethod, suggesting that \temporalmethodfull helps the model avoid revisiting the same erroneous predictions. 
However, introducing a decay factor alone still improves over the no-history baseline but performs worse than the \temporalmethod-only variant, indicating that simply weakening historical signals is insufficient for effective correction. After further incorporating \distancemethod, which explicitly disentangles historical information, applying decay becomes beneficial. Our full method achieves the best performance across all metrics. These results suggest that properly structuring historical information, rather than merely attenuating it, is crucial for improving iterative error correction, leading to more accurate Sudoku solutions with fewer rule violations.

\subsection{Text reasoning task}
\label{sec:textgen-experiments}

Finally, we evaluate on text generation tasks, including mathematical problem solving and code generation, which require fully autonomous reasoning.
Given only a problem description, the model must generate complete solutions without any direct hints about the final answer. These tasks present the most challenging setting, as the model needs to explore an unconstrained output space while maintaining logical consistency.
In this regime, both \methodfull and \temporalmethodfull become critical: \method enables iterative revision of intermediate outputs, while \temporalmethod provides essential guidance by leveraging prior predictions to stabilize and improve the reasoning process.

\textbf{Experiment settings.}
We use LLaDA~\citep{nie2025large} as the base model and implement it using the dLLM~\citep{zhou2026dllm} library. We evaluate our method on reasoning benchmarks including MATH~\citep{lightman2023let,hendrycks2021measuring,lewkowycz2022solving}, MBPP~\citep{austin2021program}, and ARC-Challenge~\citep{clark2018think}. MATH and MBPP require multi-step reasoning and unconstrained generation, making them suitable for evaluating the revision capability introduced by \method. For each benchmark, we construct task-specific training data from the corresponding training split to equip the model with the revision capability of \method. For MBPP, we randomly use 30\% of the data for training and reserve the remaining 70\% for testing. In contrast, ARC-Challenge is evaluated in a standard forward-only multiple-choice setting, serving primarily to verify that introducing \method does not compromise the model’s original inference capability on conventional generation tasks.

\input{tables/formal/text}
\input{figures/text_case_main_paper}

\textbf{Results.}
Fig.~\ref{fig:text_case_main_paper} illustrates a representative revision trajectory, while Tab.~\ref{tab:textgen-main} and Tab.~\ref{tab:minerva-subject} report the quantitative results on text reasoning tasks. As shown in Fig.~\ref{fig:text_case_main_paper}, even when the model initially produces an incorrect answer, the introduction of \method enables subsequent correction through iterative revision. Specifically, the model first focuses on regions with richer local context, the chain-of-thought (CoT) portion, where it identifies erroneous tokens, selectively re-masks them, and predicts corrected replacements while avoiding unnecessary changes to already correct tokens. As the CoT becomes progressively more accurate, the improved context is propagated to the rest of the sequence. This allows the model to subsequently revise the final answer, ultimately arriving at the correct solution. Moreover, compared with prior re-mask methods that mainly involve fewer logic-critical token changes, our \method can revise mathematical tokens that directly determine the reasoning process and final answer. This behavior highlights that \method not only enables error correction but also performs targeted and context-aware revisions, improving both intermediate reasoning steps and final outputs.

\input{tables/formal/minerva_subject}
Tab.~\ref{tab:textgen-main} shows that \method consistently improves over both LLaDA and Vanilla SFT across math, code, and ARC-Challenge benchmarks. Tab.~\ref{tab:minerva-subject} further breaks down the results on Minerva MATH~\citep{lewkowycz2022solving} by subject category. \method improves over Vanilla SFT on nearly all subjects. Notably, the performance gain on MBPP is larger than that on MATH500. We attribute this to the nature of code generation tasks, where correctness depends on a larger number of tokens compared to MATH500. In contrast, MATH500 evaluation focuses primarily on the final answer. As a result, our RM method benefits more in code tasks, where iterative revision can correct a greater number of token-level errors. These results suggest that enabling the model to revise previous predictions provides consistent benefits for text generation, especially in tasks that require multi-step reasoning or structured outputs.

%% file: tables/formal/image.tex
\begin{table}[htbp]
    \centering
    \small
    \setlength{\tabcolsep}{3pt}
    \caption{
        Quantitative results on the image editing task.
    }
    \label{tab:img-editing-main-results}
    \begin{tabular}{lccccccc}
        \toprule
        \multirow{2}{*}{Method}
        & \multicolumn{2}{c}{Edit Localization}
        & \multicolumn{3}{c}{Background Preservation}
        & \multicolumn{2}{c}{Overall Editing Quality} \\
        \cmidrule(lr){2-3}
        \cmidrule(lr){4-6}
        \cmidrule(lr){7-8}
        & Prec. $\uparrow$
        & Coverage $\uparrow$
        & MAE-RGB $\downarrow$
        & PSNR (dB) $\uparrow$
        & SSIM $\uparrow$
        & VQAScore $\uparrow$
        & User Study $\uparrow$ \\
        \midrule
        Lumina
        & 65.81
        & 41.68
        & 12.497
        & 23.092
        & 0.6256
        & 71.95
        & 41.8 \\
        Lumina+SFT
        & 71.84
        & 48.42
        & 11.035
        & 23.900
        & 0.6570
        & 81.61
        & 53.3 \\
        \method(Ours)
        & \textbf{99.73}
        & \textbf{73.02}
        & \textbf{3.613}
        & \textbf{34.759}
        & \textbf{0.9744}
        & \textbf{85.17}
        & \textbf{68.2} \\
        \bottomrule
    \end{tabular}
\end{table}

%% file: figures/Image_main_paper.tex
\begin{figure}[t]
    \centering
    \includegraphics[width=\linewidth]{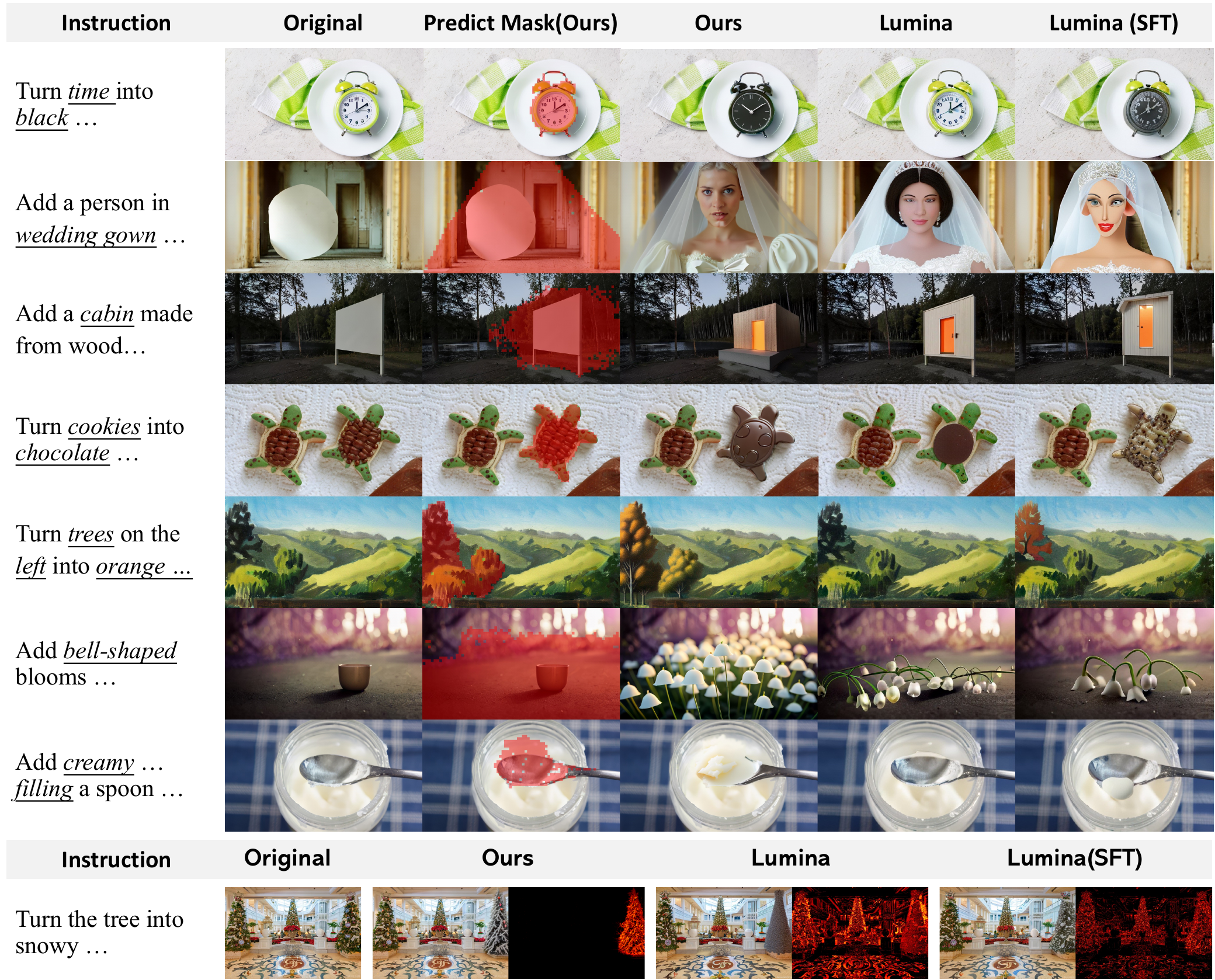}
    \caption{\textbf{Qualitative results on the image editing task.}
The masks predicted by \method are highlighted in red. Heat maps at the bottom visualize pixel-wise differences between the edited images and the originals.
Please refer to Appendix~\ref{app:Image-Gallery} for the complete editing prompts and more qualitative results.
}
    \label{fig:image_edit_main_paper}
\end{figure}

%% file: tables/formal/sudoku_ablation_components.tex
\begin{table*}[htbp]
\centering
\scriptsize
\setlength{\tabcolsep}{3pt}
\caption{
Quantitative results on Sudoku revision.
}
\begin{tabular}{lcccccc}

\toprule
Variant & Exact Accuracy(\%) $\uparrow$ & Valid Rate(\%) $\uparrow$ & Replay Mistake(\%) $\downarrow$ & Conflict Cells (Number \,/\, board) $\downarrow$ \\
\midrule
\texttt{\method w/o \temporalmethod} & 82.4 & 86.6 & 0.57 & 0.578 \\
\texttt{\method + \temporalmethod} & 91.4 ($\uparrow 9.0$) & 91.8 ($\uparrow 5.2$) & 0.07 ($\downarrow 0.50$) & 0.300 ($\downarrow 0.278$) \\
\texttt{\method + \temporalmethod+\;decay} & 89.4 ($\uparrow 7.0$) & 89.6 ($\uparrow 3.0$) & 0.07 ($\downarrow 0.50$) & 0.362 ($\downarrow 0.216$) \\
\texttt{\method + \temporalmethod+\;decay\;+\;\distancemethod} & \textbf{93.4} ($\uparrow 11.0$) & \textbf{93.6} ($\uparrow 7.0$) & \textbf{0.03} ($\downarrow 0.54$) & \textbf{0.236} ($\downarrow 0.342$) \\
\bottomrule
\end{tabular}

\label{tab:sudoku-ablation-components}
\end{table*}

%% file: tables/formal/text.tex
\begin{table}[htbp]
\centering
\small
\caption{
Performance comparison across benchmarks. $\Delta$ denotes the improvement over Vanilla SFT.
}
\begin{tabular}{lccccc}
\toprule
Benchmark & Category & LLaDA(\%) & Vanilla SFT(\%) & Ours(\%) & $\Delta$ \\
\midrule
MATH500 & Math & 19.4 & 22.4 & \textbf{24.8} & $\uparrow 2.4$ \\
MBPP & Code & 28.0 & 30.6 & \textbf{39.4} & $\uparrow 8.8$ \\
ARC-Challenge & MCQA & 73.7 & 81.3 & \textbf{86.1} & $\uparrow 4.8$ \\
\bottomrule
\end{tabular}

\label{tab:textgen-main}
\end{table}

%% file: figures/text_case_main_paper.tex
\begin{figure}[t]
\centering
\includegraphics[width=\linewidth]{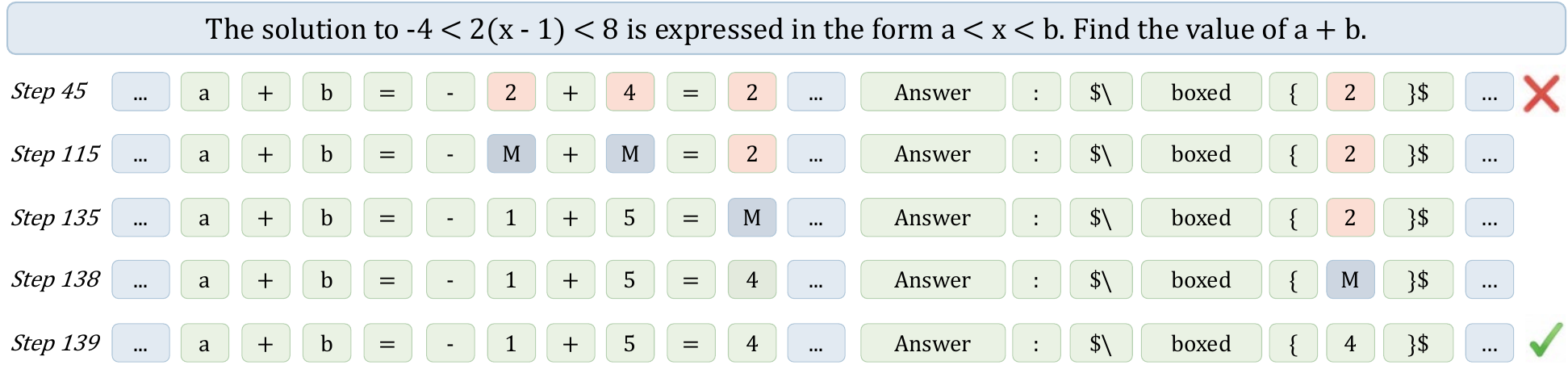}
\caption{
\method actively re-masks and corrects tokens during inference based on the evolving global context.
As the preceding context is refined, the correction propagates to the final answer. Please refer to Appendix~\ref{app:Text-Cases} for more results and representative revision trajectories.
}
\label{fig:text_case_main_paper}
\end{figure}

%% file: tables/formal/minerva_subject.tex
\begin{table}[htbp]
\centering
\caption{Performance comparison on Minerva MATH task across different subject categories.
}

\small
\resizebox{0.95\linewidth}{!}{
\begin{tabular}{lcccccccc}

\toprule
Subject & Algebra & Counting \& Prob. & Geometry & Intermediate Alg. & Num Theory & Prealgebra & Precalc & Aggregate \\
\midrule
Vanilla SFT(\%) & 28.90 & 17.72 & 20.67 & 13.07 & 17.41 & 36.28 & 14.10 & 22.62 \\
Ours \method(\%) & \textbf{29.49} & \textbf{18.35} & \textbf{20.67} & \textbf{14.40} & \textbf{21.67} & \textbf{38.00} & \textbf{16.67} & \textbf{24.10} \\
$\Delta$(\%) & $\uparrow0.59$ & $\uparrow0.63$ & $0$ & $\uparrow1.33$ & $\uparrow4.26$ & $\uparrow1.72$ & $\uparrow2.57$ & $\uparrow1.48$ \\
\bottomrule
\end{tabular}
}

\label{tab:minerva-subject}
\end{table}

%% file: body/limitations.tex
\section{Limitations and future work}
\label{sec:limitations}

While we evaluate \method on image editing, Sudoku revision, and text generation, these tasks still remain substantially simpler than the most challenging long-horizon reasoning problems typically studied in AR models, due to the limited reasoning capability of the current base mask diffusion models. In addition, our experiments are constrained by computational resources, and we do not investigate whether the \method capability can effectively transfer under significantly larger-scale training regimes. For future work, applying \method to stronger block diffusion models~\citep{arriola2025block,cheng2025sdar,bie2025llada2} presents an interesting direction; however, a direct application would only revise tokens within the currently generated block, limiting its effectiveness. Developing mechanisms for revising the full context beyond the current block may therefore be important for enabling more global self-correction and long-range reasoning.

%% file: body/conclusion.tex
\section{Conclusion}
\label{sec:conclusion}

In this work, we present \methodfull, a lightweight post-training framework that elicits reflective revision as an intrinsic capability of Mask Diffusion Models. Instead of treating generation as a one-way denoising process, \method enables an MDM to iteratively revisit, re-mask, and refine its previous predictions based on the evolving context. We further introduce History Reference, a parameter-free mechanism that exposes intermediate denoising states to the model and helps stabilize multi-turn revision by reducing repeated errors. Experiments on image editing, Sudoku revision, and text generation suggest that explicit in-place revision provides a natural form of test-time scaling for MDMs, opening a promising direction for reasoning with diffusion-based generative models.

%% file: body/appendix.tex
\clearpage
\appendix

\section{Theoretical Analysis}
\label{app:theory}

\subsection{Bayes-Consistent Revision-Policy Learning}
\label{app:bayes-consistency}

Under a fixed, $\theta$-independent corruption proposal, cross-entropy training on per-position oracle revision labels recovers the conditional distribution of the optimal revision action at every position (Theorem~\ref{thm:bayes-consistency}). Theorem~\ref{thm:plugin-excess-risk} converts model-side TV distance into excess 0--1 risk; Proposition~\ref{prop:history-monotonicity} shows the \temporalmethodfull conditioning cannot increase Bayes risk.

\paragraph{Notation.}
Let $\mathcal{V}$ denote the token vocabulary, $\bar{\mathcal{V}} := \mathcal{V} \cup \{\mask\}$, and $E \subseteq \{1, \ldots, N\}$ the set of edit positions. A training sample is drawn as follows. Sample $(\xstar, c) \sim \mathcal{D}$, then a rule $\rho_i \sim \pi_\rho$ over $\{\mathrm{wrong}, \mathrm{mask}\}$ independently at each $i \in E$. For $\rho_i = \mathrm{wrong}$, draw a source token $w_i \sim \nu_{\mathrm{wrong}}(\cdot \mid \xstar_i)$ on $\mathcal{V} \setminus \{\xstar_i\}$ and boundaries $(\beta_i, \mu_i)$ from a fixed $\theta$-independent law $\mathcal{B}$ on $\{1 \le \beta < \mu \le T\}$; for $\rho_i = \mathrm{mask}$, draw a single boundary $\mu_i \sim \mathrm{Uniform}\{1, \ldots, T\}$. The current-step index is $t \sim \mathrm{Uniform}\{0, \ldots, T-1\}$. Given these boundaries, the trajectory $\tilde{x}^{(0:T)}$ is per-position deterministic: $w_i \to \mask \to \xstar_i$ at $(\beta_i, \mu_i)$ for wrong-rule positions, $\mask \to \xstar_i$ at $\mu_i$ for mask-rule positions, and $\xstar_i$ throughout for non-edit positions $i \notin E$.

We write $z := \tilde{x}^{(t)}$ for the current state, $H := \tilde{x}^{(<t)}$ for the trajectory prefix, and $\phi(H, c) \in \mathbb{R}^d$ for a deterministic history feature map (the history component of the \temporalmethod accumulated embedding $a^{(t)}$; see Appendix~\ref{app:rope-engineering}). The accumulated embedding satisfies $a^{(t)} = e^{(t)} + \phi(H, c)$ with $e^{(t)} = \mathrm{wte}(z)$.

Let $X := (z, \phi(H, c), c, E)$ denote the analyst-side conditioning. Our proofs condition on $X$, while the implementation passes only $(a^{(t)}, c)$ to the model and uses $z, E$ to construct the supervision label. We write $p_\theta(\cdot \mid X)_i$ for the model's per-position categorical over $\bar{\mathcal{V}}$. By Proposition~\ref{prop:history-monotonicity}, the Bayes risk under $\sigma(X)$ lower-bounds the Bayes risk achievable from $\sigma(a^{(t)}, c)$, so Theorem~\ref{thm:bayes-consistency}'s population minimum is a lower bound on the implementation's Bayes risk.

\paragraph{Oracle revision rule.}
For each $i \in E$, the deterministic oracle next-action is
\begin{equation}
\tau(z_i, \xstar_i) \;=\; \begin{cases}
\mask & z_i \in \mathcal{V} \setminus \{\xstar_i\} \quad \text{(visible non-target token)} \\
\xstar_i & z_i = \mask \quad \text{(reveal target)} \\
\xstar_i & z_i = \xstar_i \quad \text{(preserve target)}
\end{cases}
\label{eq:oracle-tau}
\end{equation}
This is the single optimal revision action at the current state, not a per-step transition kernel for the trajectory.

\paragraph{Training objective.}
The training loss is the per-position three-state cross-entropy summed over edit positions:
\begin{equation}
\Ltrain(\theta) \;=\; \mathbb{E}_{\mathcal{D} \otimes q}\!\left[\sum_{i \in E} -\log p_\theta\!\left(\tau(z_i, \xstar_i) \,\big|\, X\right)_i\right],
\label{eq:rm-loss}
\end{equation}
where $\mathcal{D} \otimes q$ denotes the joint distribution generated by the sampling order above.

\paragraph{Conditional label distribution.}
For each $i \in E$, define
\begin{equation}
P^*_i(y \mid X) \;:=\; \Pr_{(\xstar, w, \rho, \mathrm{boundaries}, t) \sim (\mathcal{D} \otimes q) \mid X}\!\big[\tau(z_i, \xstar_i) = y\big].
\label{eq:Pstar-def}
\end{equation}
This is the marginal distribution of the oracle label given the model's input. Since $X$ does not include $\xstar$, $P^*_i$ is in general non-degenerate.

\begin{theorem}[Population minimizer recovers the conditional label distribution]
\label{thm:bayes-consistency}
Assume the rich-family idealization: $\{p_\theta\}$ is large enough that for some $\theta$, the model's per-position output $p_\theta(\cdot \mid X)_i$ matches $P^*_i(\cdot \mid X)$ jointly at every $X$ in the support of $\mathcal{D} \otimes q$ and every $i \in E$. Then any population minimizer $\theta^* \in \arg\min_\theta \Ltrain(\theta)$ satisfies
\[
p_{\theta^*}(\cdot \mid X)_i \;=\; P^*_i(\cdot \mid X) \quad \text{for } (\mathcal{D} \otimes q)\text{-a.e.\ } X \text{ and every } i \in E.
\]
\end{theorem}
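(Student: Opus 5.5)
The argument is the standard proper-scoring-rule property of the logarithmic loss, applied pointwise in the conditioning variable and per position. We condition on $X$ via the tower property, show that $P^*_i(\cdot\mid X)$ minimizes each conditional risk, and then use the rich-family assumption to pass from pointwise minimality to a statement about $\arg\min_\theta \Ltrain$.

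\textbf{Step 1 (conditional decomposition).} Write
\[
\Ltrain(\theta)=\mathbb{E}_X\Big[\sum_{i\in E}\mathbb{E}\big[-\log p_\theta(\tau(z_i,\xstar_i)\mid X)_i \,\big|\, X\big]\Big].
\]
Since $p_\theta(\cdot\mid X)_i$ is $\sigma(X)$-measurable, the inner conditional expectation equals the cross-entropy
\[
\mathcal{H}_i(X;\theta):=-\sum_{y\in\bar{\mathcal{V}}}P^*_i(y\mid X)\log p_\theta(y\mid X)_i ,
\]
by the definition of $P^*_i$ in Eq.~\eqref{eq:Pstar-def}. Here $E$ is part of $X$, so the sum over $i\in E$ is also $X$-measurable and can be moved inside.

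\textbf{Step 2 (Gibbs' inequality).} For each $X$ and $i$,
\[
\mathcal{H}_i(X;\theta)=H(P^*_i(\cdot\mid X))+\mathrm{KL}\big(P^*_i(\cdot\mid X)\,\|\,p_\theta(\cdot\mid X)_i\big)\ge H(P^*_i(\cdot\mid X)),
\]
with equality iff the KL term vanishes, that is, iff the two distributions coincide on $\bar{\mathcal{V}}$. Hence for every $\theta$,
\[
\Ltrain(\theta)\ge L^\star:=\mathbb{E}_X\Big[\sum_{i\in E}H(P^*_i(\cdot\mid X))\Big].
\]
Two side conditions need checking. $L^\star$ is finite because $\bar{\mathcal{V}}$ is finite, so each entropy is at most $\log|\bar{\mathcal{V}}|$ and the sum over $E$ is bounded by $N\log|\bar{\mathcal{V}}|$. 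If some $p_\theta$ assigns zero mass where $P^*_i>0$, the loss is $+\infty$, and the inequality still holds.

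\textbf{Step 3 (attainment and identification).} By the rich-family idealization there is a $\theta_0$ with $p_{\theta_0}(\cdot\mid X)_i=P^*_i(\cdot\mid X)$ for all $X$ in the support and all $i\in E$. Then $\Ltrain(\theta_0)=L^\star$, so $\min_\theta\Ltrain=L^\star$. Any minimizer $\theta^*$ therefore satisfies
\[
\mathbb{E}_X\Big[\sum_{i\in E}\mathrm{KL}\big(P^*_i(\cdot\mid X)\,\|\,p_{\theta^*}(\cdot\mid X)_i\big)\Big]=0 .
\]
Each term is nonnegative, so each KL vanishes for a.e.\ $X$ and every $i\in E$. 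The set $E$ is finite, so taking the union of finitely many null sets per value of $E$ gives a single exceptional null set. This yields the claim.

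\textbf{Main obstacle.} The delicate step is the measure-theoretic bookkeeping in Step 1. $X$ contains the continuous-valued feature $\phi(H,c)$, so $P^*_i(\cdot\mid X)$ must be defined as a regular conditional distribution; this exists because the labels take values in a finite set. We also need $p_\theta(\cdot\mid X)$ to be a measurable function of $X$. I would handle this by noting that the trajectory sampler has finitely many discrete outcomes given $(\xstar,c)$, so conditioning reduces to elementary sums wherever $\mathcal{D}$ is discrete. In the general case I would invoke the existence of regular conditional probabilities on finite label spaces. The remaining steps are routine.
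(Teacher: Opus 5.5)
Your proposal is correct and takes essentially the same route as the paper: you condition on $X$ via the tower property and invoke strict propriety of the log score, and your Gibbs decomposition into entropy plus KL is exactly that fact. The one organizational difference is the finish: you bound $\Ltrain$ below by $L^\star$, note that the rich-family $\theta_0$ attains it, and conclude that the expected total KL vanishes at any minimizer. This is slightly cleaner than the paper's per-position decoupling step, because it uses only the stated hypothesis that a single $\theta$ matches all $P^*_i$ jointly, not independent realizability of each position's output.
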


\begin{proof}
Decompose $\Ltrain$ point-wise in $X$ via the tower property:
\[
\Ltrain(\theta) \;=\; \mathbb{E}_X\!\left[\sum_{i \in E} \mathbb{E}_{Y_i \mid X}\!\left[-\log p_\theta(Y_i \mid X)_i\right]\right],
\]
where $Y_i := \tau(z_i, \xstar_i)$ has conditional distribution $P^*_i(\cdot \mid X)$. \emph{Per-position decoupling.} At fixed $X$, the inner sum has terms depending on $\theta$ only through the per-position output distributions $\{p_\theta(\cdot \mid X)_i\}_{i \in E}$, which the rich-family hypothesis allows $\theta$ to realize independently. Hence minimizing the sum jointly over $\theta$ is equivalent to minimizing each per-position term independently in its corresponding categorical. \emph{Per-position propriety.} By strict propriety of cross-entropy~\cite{gneiting2007strictly}, for any reference distribution $r$ on $\bar{\mathcal{V}}$ the expected score $\mathbb{E}_{y \sim r}[-\log p(y)]$ is uniquely minimized over the categorical simplex at $p = r$. Apply this with $r = P^*_i(\cdot \mid X)$ to each position $i$; the unique per-position minimizer is $p_\theta(\cdot \mid X)_i = P^*_i(\cdot \mid X)$. The almost-everywhere conclusion follows because the support of $\mathcal{D} \otimes q$ has full measure.
\end{proof}

\begin{remark}[well-specification]
If $\{p_\theta\}$ is not rich enough to represent $P^*_i$, the conclusion holds in the function-class limit: the minimizer is the closest representable distribution in expected KL.
\end{remark}

\begin{remark}[loss reweighting]
Theorem~\ref{thm:bayes-consistency} is stated for the unweighted CE objective of Eq.~\eqref{eq:rm-loss}. Adding any constant or $X$-measurable per-position weighting $w_i(X) > 0$ does not change the minimizer, since the per-$X$ minimization scales by $w_i(X)$. Label-dependent weighting tilts the minimizer but is not used in our experiments.
\end{remark}

The general form of Theorem~\ref{thm:bayes-consistency}'s minimizer simplifies sharply at each of the two state types. Define $\alpha_i(X) := \Pr(\xstar_i = z_i \mid X)$, the conditional probability that the visible token equals the gold target.

\begin{proposition}[Visible-token minimizer is a binary calibrated correctness belief]
\label{prop:visible-belief}
At any $X$ with $z_i \neq \mask$,
\[
P^*_i(\cdot \mid X) \;=\; \alpha_i(X) \cdot \delta_{z_i} + (1 - \alpha_i(X)) \cdot \delta_{\mask}.
\]
\end{proposition}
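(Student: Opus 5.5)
The plan is to compute $P^*_i(\cdot \mid X)$ directly from its definition in Eq.~\eqref{eq:Pstar-def}, using the fact that the oracle $\tau$ of Eq.~\eqref{eq:oracle-tau} is a deterministic function of the pair $(z_i,\xstar_i)$. The key observation is that $z_i$ is a coordinate of $X = (z, \phi(H,c), c, E)$, so conditioning on $X$ fixes $z_i$. All remaining randomness in the label $Y_i = \tau(z_i,\xstar_i)$ comes from the unknown target $\xstar_i$.

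\textbf{Step 1 (case split on the target).} Fix $X$ in the support with $z_i \neq \mask$, so $z_i \in \mathcal{V}$. Split the underlying sample space, conditional on $X$, into two events: $\{\xstar_i = z_i\}$ and $\{\xstar_i \neq z_i\}$. On the first event, the third branch of Eq.~\eqref{eq:oracle-tau} applies, giving $Y_i = \xstar_i = z_i$. On the second event, $z_i \in \mathcal{V}\setminus\{\xstar_i\}$, so the first branch applies and $Y_i = \mask$. The middle branch ($z_i=\mask$) is excluded by hypothesis. Thus $Y_i$ takes only the two values $z_i$ and $\mask$. These values are distinct because $z_i \in \mathcal{V}$ and $\mask \notin \mathcal{V}$.

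\textbf{Step 2 (assemble).} By the definition of $\alpha_i(X) = \Pr(\xstar_i = z_i \mid X)$, we get
\[
P^*_i(z_i \mid X) = \Pr(\xstar_i = z_i \mid X) = \alpha_i(X), \qquad P^*_i(\mask \mid X) = 1-\alpha_i(X),
\]
and $P^*_i(y\mid X)=0$ for every other $y \in \bar{\mathcal{V}}$. This is exactly the claimed mixture $\alpha_i(X)\,\delta_{z_i} + (1-\alpha_i(X))\,\delta_{\mask}$.

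\textbf{Main obstacle.} The only delicate point is measure-theoretic. We must make sure that conditioning on $X$ legitimately pins $z_i$, and that $\alpha_i$ and $P^*_i$ are defined through the same regular conditional distribution of $(\xstar, w, \rho, \text{boundaries}, t)$ given $X$. Both hold here: the state space is finite, $z_i$ is a component of $X$, and both quantities are computed under the same conditional law. So the identity holds for every $X$ of positive probability, and hence $(\mathcal{D}\otimes q)$-a.e.

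One consistency check concerns the sampling scheme, in which a visible $z_i\neq\xstar_i$ arises only as a wrong token $w_i$ before $\beta_i$. The argument does not depend on this structure, since it uses only the deterministic form of $\tau$.
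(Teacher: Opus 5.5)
Your proposal is correct and follows essentially the same route as the paper: $X$ fixes $z_i$, you split on whether $\xstar_i = z_i$, read off $\tau = z_i$ or $\tau = \mask$ from the oracle rule, and identify the two weights as $\alpha_i(X)$ and $1-\alpha_i(X)$. Your extra remarks tidy up points the paper leaves implicit: that $z_i$ and $\mask$ are distinct atoms because $\mask \notin \mathcal{V}$, and that $\alpha_i$ and $P^*_i$ are taken under the same conditional law.
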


\begin{proof}
Conditional on $z_i \neq \mask$, the value $z_i$ is determined by $X$. The remaining randomness in $\tau(z_i, \xstar_i)$ given $X$ is the indicator $\mathbf{1}\{z_i = \xstar_i\}$, which equals 1 with probability $\alpha_i(X)$ and 0 otherwise. By Eq.~\eqref{eq:oracle-tau}, $\tau = z_i$ in the first case and $\tau = \mask$ in the second.
\end{proof}

\textbf{Interpretation.} At visible-token positions, the population minimizer encodes a calibrated belief about whether the visible token is the gold target. The inference rule \emph{mask iff $p_\theta(\mask \mid X) > p_\theta(z_i \mid X)$} is the Bayes-optimal binary decision based on this belief, since the two probabilities are the atoms of $P^*_i$.

\begin{remark}[degenerate case]
If $\mathbf{1}\{z_i = \xstar_i\}$ is $X$-measurable, then $\alpha_i(X) \in \{0, 1\}$ a.s.\ and $P^*_i$ collapses to $\delta_{\mask}$ or $\delta_{z_i}$. The main results do not require this.
\end{remark}

\begin{proposition}[Calibrated gold belief at MASK positions]
\label{prop:mask-belief}
At any $X$ with $z_i = \mask$,
\[
P^*_i(y \mid X) \;=\; \Pr_{(\xstar, w) \sim (\mathcal{D} \otimes q) \mid X}[\xstar_i = y].
\]
\end{proposition}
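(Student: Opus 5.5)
We mirror the argument used for Proposition~\ref{prop:visible-belief}. We begin from the definition of $P^*_i$ in Eq.~\eqref{eq:Pstar-def} and restrict attention to an $X$ with $z_i = \mask$. Because $z$ is a component of $X$, the event $\{z_i = \mask\}$ is $X$-measurable. Consequently, under the conditional law $(\mathcal{D}\otimes q)\mid X$ the current state at position $i$ is almost surely $\mask$. Substituting into the oracle rule Eq.~\eqref{eq:oracle-tau}, the second case applies deterministically, so that $\tau(z_i,\xstar_i) = \xstar_i$ holds pointwise on this event.

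Next we rewrite the probability directly:
\[
P^*_i(y \mid X) = \Pr\big[\tau(z_i,\xstar_i)=y \mid X\big] = \Pr\big[\xstar_i = y \mid X\big].
\]
The event $\{\tau = y\}$ coincides with $\{\xstar_i = y\}$ almost surely under the conditional law, and the two probabilities therefore agree. It remains to justify that the conditioning may be written over $(\xstar,w)$ alone, as in the statement. The remaining sampled variables (rules $\rho$, boundaries, $t$) enter the event $\{\xstar_i = y\}$ only through marginalization. Hence the probability of this event under $(\mathcal{D}\otimes q)\mid X$ equals its probability under the $(\xstar,w)$-marginal of that conditional law.

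One consequence is worth recording. When $y=\mask$ we have $\xstar_i\in\mathcal{V}$, so $P^*_i(\mask\mid X)=0$, and $P^*_i$ is supported on $\mathcal{V}$. This is consistent with the reveal rule in Eq.~\eqref{eq:remask-rule}, which takes its argmax over $\mathcal{V}$.

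The main obstacle is measure-theoretic rather than substantive. We must ensure that the conditioning on $X$, which includes the continuous feature $\phi(H,c)$, is well defined, so that "$z_i=\mask$ given $X$" holds almost surely and not merely as an identity between events. The plan is to work with a regular conditional distribution given $\sigma(X)$, which exists because all sampled quantities are discrete and $\phi$ is a deterministic measurable map. We then state the identity for $(\mathcal{D}\otimes q)$-a.e.\ $X$, matching the convention of Theorem~\ref{thm:bayes-consistency}.
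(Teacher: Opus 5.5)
Your proposal is correct and follows the paper's proof: on the $X$-measurable event $\{z_i = \mask\}$ the oracle rule gives $\tau(z_i,\xstar_i) = \xstar_i$ identically, so the conditional law of $\tau$ given $X$ coincides with that of $\xstar_i$. Your additional remarks are correct refinements that the paper leaves implicit: the reduction to the $(\xstar,w)$-marginal, the observation $P^*_i(\mask \mid X) = 0$, and the a.e.\ conditioning caveat. Since every sampled quantity is discrete, $X$ has countable range, so elementary conditioning already suffices and regular conditional distributions are not needed.
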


\begin{proof}
When $z_i = \mask$, Eq.~\eqref{eq:oracle-tau} gives $\tau(z_i, \xstar_i) = \xstar_i$ tautologically, so the conditional distribution of $\tau$ given $X$ equals the conditional distribution of $\xstar_i$ given $X$.
\end{proof}

\textbf{Interpretation.} At MASK positions the population minimizer becomes a conditional language model over the gold token --- the same target as \cite{nie2025large}'s two-state model. Sampling from $p_\theta(\cdot \mid \mask, \phi(H, c), c)$ recovers the standard mask-prediction sampling step.

\paragraph{Summary.} \method\ trains a single per-position categorical that carries two calibrated beliefs: a binary $\{z_i, \mask\}$ correctness belief at visible-token positions, and a full-vocabulary gold-token belief at MASK positions. The inference rule reads each off the appropriate atoms --- the re-mask check at visible positions and the candidate reveal at MASK positions in Eq.~\ref{eq:remask-rule}.

\subsection{Plug-in Excess-Risk Theorem}
\label{app:plugin-excess-risk}

We analyze the deterministic argmax plug-in policy
\[
\hat{a}_\theta(X)_i \;:=\; \arg\max_{v \in \bar{\mathcal{V}}} p_\theta(v \mid X)_i,
\]
with ties broken by a fixed model-independent rule (e.g., lexicographic order on $\bar{\mathcal{V}}$); the Bayes-optimal classifier $\hat{a}^*(X)_i := \arg\max_v P^*_i(v \mid X)$ is defined under the same tie-breaking rule. Bounds below hold for any such fixed choice. The per-position 0--1 revision risk against the oracle is
\[
R(\theta) \;:=\; \mathbb{E}_{(\mathcal{D} \otimes q)}\!\left[\frac{1}{|E|} \sum_{i \in E} \mathbf{1}\!\left[\hat{a}_\theta(X)_i \neq \tau(z_i, \xstar_i)\right]\right],
\]
and the Bayes risk is $R^* := \inf_p R(p)$. By Propositions~\ref{prop:visible-belief}--\ref{prop:mask-belief}, $R^*$ has irreducible contributions from \emph{both} state types: $1 - \max_y P^*_i(y \mid X)$ at MASK positions, and $\min(\alpha_i(X), 1 - \alpha_i(X))$ at visible-token positions. Both vanish only at degenerate $X$ where $P^*_i$ is a delta.

\begin{theorem}[Plug-in excess Bayes risk]
\label{thm:plugin-excess-risk}
Under the setup above,
\[
R(\theta) - R^{*} \;\leq\; 2 \cdot \mathbb{E}_X\!\left[\frac{1}{|E|} \sum_{i \in E} \TV{p_\theta(\cdot \mid X)_i}{P^*_i(\cdot \mid X)}\right].
\]
\end{theorem}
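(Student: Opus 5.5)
The bound will follow from the standard plug-in classification argument applied position by position, followed by averaging. By the tower property, conditioning on $X$, the risk of any deterministic $X$-measurable policy $\hat a$ at position $i$ is
\[
\Pr\!\big[\hat a(X)_i \neq \tau(z_i,\xstar_i)\big] \;=\; \mathbb{E}_X\!\left[1 - P^*_i(\hat a(X)_i \mid X)\right],
\]
because the label $Y_i = \tau(z_i,\xstar_i)$ has conditional law $P^*_i(\cdot\mid X)$ (definition \eqref{eq:Pstar-def}). The right-hand side is minimized pointwise by the Bayes classifier $\hat a^*(X)_i$. Hence $R^* = \mathbb{E}_X\big[\tfrac{1}{|E|}\sum_{i}(1 - \max_v P^*_i(v\mid X))\big]$; since the infimum over $X$-measurable policies is attained pointwise, no measurability subtleties arise. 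The difference $R(\theta)-R^*$ is then
\[
\mathbb{E}_X\Big[\tfrac{1}{|E|}\textstyle\sum_i \big(P^*_i(\hat a^*_i\mid X) - P^*_i(\hat a_{\theta,i}\mid X)\big)\Big].
\]
When $E$ is random we treat $|E|$ as $X$-measurable, since $E$ is a component of $X$, so it factors through the conditional expectation.

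The core step is a pointwise inequality at fixed $X$ and $i$. Write $p=p_\theta(\cdot\mid X)_i$, $r=P^*_i(\cdot\mid X)$, $a=\hat a_\theta(X)_i$ and $b=\hat a^*(X)_i$. Since $a$ maximizes $p$, we have $p(b)\le p(a)$, which gives
\[
r(b)-r(a) \;=\; \big(r(b)-p(b)\big) + \big(p(b)-p(a)\big) + \big(p(a)-r(a)\big) \;\le\; |r(b)-p(b)| + |p(a)-r(a)|.
\]
If $a=b$ the left side is $0$. Otherwise $a\neq b$ are two distinct atoms, so the two absolute differences sum to at most $\sum_v |p(v)-r(v)| = 2\,\mathrm{TV}(p,r)$. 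Tie-breaking is harmless here, because only $p(a)\ge p(b)$ is used and the fixed rule picks some maximizer.

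Summing over $i\in E$, dividing by $|E|$, and taking $\mathbb{E}_X$ yields the stated bound. I do not expect a genuine obstacle, since the argument is elementary. The only point needing care is justifying the conditional-risk identity: the label is a function of $(z_i,\xstar_i)$ while the policy depends only on $X$, and $z_i$ is contained in $X$. One might hope to sharpen the constant $2$ to $1$ using $\max(|r(b)-p(b)|,|p(a)-r(a)|)$-type arguments, but that is unnecessary for the statement as given.
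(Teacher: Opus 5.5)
Your proof is correct and is essentially the paper's argument: fix $X$ and $i$, split $P^*(\hat a^*)-P^*(\hat a_\theta)$ into the same three terms, drop the middle one by the argmax property, bound the two remaining terms by two distinct summands of $\sum_v |p_\theta(v)-P^*(v)| = 2\,\mathrm{TV}$, then average over $i$ and take $\mathbb{E}_X$. Drop your closing aside about sharpening the constant: $2$ cannot be improved to $1$, since (as the paper remarks) $P^*=\delta_1$ and $p_\theta=(1/2-\epsilon,\,1/2+\epsilon)$ give excess risk $1$ while $\mathrm{TV}\to 1/2$.
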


\begin{proof}
Fix $X$ and $i \in E$; drop the $i$ subscript on $P^*$. By definition $Y := \tau(z_i, \xstar_i)$ has conditional distribution $P^*(\cdot \mid X)$, so $\Pr_{Y \sim P^*}[\hat{a} \neq Y \mid X] = 1 - P^*(\hat{a} \mid X)$ for any deterministic classifier $\hat{a}$ measurable in $X$.

\emph{Case $\hat{a}_\theta = \hat{a}^*$.} Per-$X$ excess risk is zero; the bound is trivial.

\emph{Case $\hat{a}_\theta \neq \hat{a}^*$.} Per-$X$ excess risk equals $P^*(\hat{a}^* \mid X) - P^*(\hat{a}_\theta \mid X)$. Decompose:
\[
P^*(\hat{a}^*) - P^*(\hat{a}_\theta) = \underbrace{[P^*(\hat{a}^*) - p_\theta(\hat{a}^*)]}_{(\mathrm{A})} + \underbrace{[p_\theta(\hat{a}^*) - p_\theta(\hat{a}_\theta)]}_{(\mathrm{B})} + \underbrace{[p_\theta(\hat{a}_\theta) - P^*(\hat{a}_\theta)]}_{(\mathrm{C})}.
\]
$(\mathrm{B}) \leq 0$ since $\hat{a}_\theta$ maximizes $p_\theta$ (under the fixed model-independent tie-break stated above). Bound each of (A) and (C) by absolute value: $(\mathrm{A}) \leq |p_\theta(\hat{a}^*) - P^*(\hat{a}^*)|$ and $(\mathrm{C}) \leq |p_\theta(\hat{a}_\theta) - P^*(\hat{a}_\theta)|$. Since $\hat{a}^* \neq \hat{a}_\theta$, these are two distinct nonnegative terms in the sum $\sum_v |p_\theta(v) - P^*(v)|$, hence $(\mathrm{A}) + (\mathrm{C}) \leq \sum_v |p_\theta(v) - P^*(v)| = 2 \cdot \TV{p_\theta}{P^*}$. Take expectation over $X$ and average over $i$.
\end{proof}

\begin{remark}[on the factor of 2]
This is the standard Devroye--Györfi--Lugosi excess-risk bound~\cite[Theorem~2.2]{devroye2013probabilistic}. The constant 2 is tight: with $P^* = \delta_1$ and $p_\theta = (1/2 - \epsilon, 1/2 + \epsilon)$, excess risk equals 1 while $\mathrm{TV} \to 1/2$.
\end{remark}

\paragraph{Combining Theorems~\ref{thm:bayes-consistency} and \ref{thm:plugin-excess-risk}.}
At the population minimum $p_\theta = P^*$, so $R(\theta^*) = R^*$: the CE-loss minimizer is Bayes-optimal under 0--1 revision risk. The Bayes risk $R^*$ is non-zero by construction (Propositions~\ref{prop:visible-belief}--\ref{prop:mask-belief}); driving CE loss further does not --- and should not --- reduce $R^*$.

\subsection{\temporalmethodfull{} Information Bound}
\label{app:history-monotonicity}

\temporalmethodfull augments the conditioning: the model receives not just $(z, c)$ but $(z, \phi(H, c), c)$. For a random variable (or tuple) $W$, write $\sigma(W)$ for the $\sigma$-algebra generated by $W$; a decision rule $\hat{a}$ is $\sigma(W)$-measurable exactly when $\hat{a}$ can be written as a deterministic function of $W$.

\begin{proposition}[History monotonicity]
\label{prop:history-monotonicity}
Let $\sigma_{\mathrm{base}} := \sigma(z, c)$ and $\sigma_{\mathrm{HR}} := \sigma(z, H, c)$. Then
\[
\inf_{\hat{a} \,\sigma_{\mathrm{HR}}\text{-meas.}} \mathbb{E}\!\left[\mathbf{1}\!\left[\hat{a} \neq \tau\right]\right] \;\leq\; \inf_{\hat{a} \,\sigma_{\mathrm{base}}\text{-meas.}} \mathbb{E}\!\left[\mathbf{1}\!\left[\hat{a} \neq \tau\right]\right].
\]
\end{proposition}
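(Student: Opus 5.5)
The argument is a set-inclusion one: enlarging the conditioning $\sigma$-algebra can only enlarge the feasible class of decision rules, so the infimum can only decrease. We write $\tau$ for the per-position label $\tau(z_i,\xstar_i)$; the $|E|$-averaged version follows by summing over $i \in E$.

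First, observe that $\sigma_{\mathrm{base}} = \sigma(z,c) \subseteq \sigma(z,H,c) = \sigma_{\mathrm{HR}}$. This holds because the tuple $(z,c)$ is a coordinate projection of $(z,H,c)$, hence a measurable function of it. Consequently, any $\sigma_{\mathrm{base}}$-measurable rule $\hat{a}$ is also $\sigma_{\mathrm{HR}}$-measurable. Equivalently, any deterministic function $g(z,c)$ can be rewritten as $\tilde g(z,H,c) := g(z,c)$, which simply ignores $H$.

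Second, for every such $\hat{a}$, the expected 0--1 loss $\mathbb{E}[\mathbf{1}[\hat{a}\neq\tau]]$ is the same number whichever class we regard $\hat{a}$ as belonging to. The loss is computed under the single joint law $\mathcal{D}\otimes q$, and this law does not depend on which $\sigma$-algebra we consider. The feasible set on the right-hand side is therefore a subset of the feasible set on the left, with identical objective values on the common elements, and the infimum over a superset is at most the infimum over a subset. This completes the argument, and it needs no attainment or regularity assumptions.

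For concreteness, we can also identify the Bayes rules. Over $\sigma_{\mathrm{HR}}$ the rule is $\arg\max_y \Pr(\tau=y\mid z,H,c)$, with risk $\mathbb{E}[1-\max_y \Pr(\tau=y\mid z,H,c)]$. Applying the tower property and Jensen's inequality to the convex map $p\mapsto\max_y p(y)$ gives
\[
\mathbb{E}\bigl[\max_y \Pr(\tau=y\mid z,H,c)\,\big|\, z,c\bigr] \ge \max_y \Pr(\tau=y\mid z,c),
\]
which recovers the same inequality explicitly.

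The only step that needs care is the interface with what the model actually sees. The model sees $\phi(H,c)$ or $a^{(t)}$ rather than $H$ itself. The same inclusion argument applies to any chain $\sigma(a^{(t)},c)\subseteq\sigma(z,\phi(H,c),c)\subseteq\sigma(z,H,c)$, since $\phi$ is deterministic and $a^{(t)}=\mathrm{wte}(z)+\phi(H,c)$. However, $\sigma(z,c)\subseteq\sigma(a^{(t)},c)$ requires $z$ to be recoverable from $a^{(t)}$, which is not automatic. For this reason we state the proposition at the level of $(z,H,c)$ and do not claim it for the implementation's input. There are no other obstacles; the argument is elementary.
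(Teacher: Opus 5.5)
Your proof is correct and uses the same argument as the paper: $(z,c)$ is a sub-tuple of $(z,H,c)$, so $\sigma_{\mathrm{base}}\subseteq\sigma_{\mathrm{HR}}$, every base-measurable rule is also HR-measurable, and an infimum over the larger class cannot be larger. Your Jensen identification of the Bayes rules is a correct extra, and so is your caveat that $\sigma(z,c)\subseteq\sigma(a^{(t)},c)$ needs $z$ to be recoverable from $a^{(t)}$; the paper's main-text remark about conditioning on $a^{(t)}$ assumes this implicitly.
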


\begin{proof}
$\sigma_{\mathrm{base}} \subseteq \sigma_{\mathrm{HR}}$ since $(z, c)$ is a sub-tuple of $(z, H, c)$. Hence every $\sigma_{\mathrm{base}}$-measurable estimator is also $\sigma_{\mathrm{HR}}$-measurable, so the infimum over a larger family cannot exceed the infimum over a smaller one.
\end{proof}

The same argument applies to any history feature $\phi$ used by the model: $\sigma(z, \phi(H, c), c) \subseteq \sigma(z, H, c)$, so $\phi$ achieves a Bayes risk between the no-history baseline and the full trajectory. Whether the \temporalmethod accumulated embedding strictly decreases Bayes risk relative to the no-history baseline is an empirical question answered in Section~\ref{sec:experiments}.

\paragraph{Information hierarchy.}
The data-processing argument gives a three-level hierarchy --- current state only, current state plus the accumulated embedding, current state plus the full trajectory:
\[
\sigma\!\left(z, c\right) \;\subseteq\; \sigma\!\left(z, \phi(H, c), c\right) \;\subseteq\; \sigma\!\left(z, H, c\right).
\]
The history feature $\phi(H, c)$ is a compressed history signal; distinct trajectories generally produce distinct $\phi(H, c)$, but the mapping is many-to-one.

\subsection{Top-k Variant: Distribution-Shift TV Bound}
\label{app:topk-tv-gap}

Theorem~\ref{thm:bayes-consistency} assumes a fixed corruption $\nu$. In practice we use a non-uniform $\theta$-independent corruption $\nu^{(\mathrm{prac})}$: a frozen pre-trained checkpoint's top-$k$ distribution for text generation, a fixed wrong-digit distribution for Sudoku, and the source-image VQ tokens for image editing. In the image-edit case the VQ token is taken as $w_i$ only when $w_i \neq \xstar_i$; positions where the source already matches the target fall under the keep branch, keeping $\nu^{(\mathrm{prac})}_{\mathrm{wrong}}$ supported on $\mathcal{V} \setminus \{\xstar_i\}$. The loss gap between $\nu_{\mathrm{unif}}$ and $\nu^{(\mathrm{prac})}$ decomposes in three layers.

\begin{assumption}[Bounded log-likelihood]
\label{ass:bounded-rm}
Per-token log-probabilities are clipped: $-\log p_\theta(v \mid X)_i \leq B := -\log\varepsilon$ for every token $v$, position $i$, and input $X$. We use $\varepsilon = 10^{-8}$, giving $B \approx 18$.
\end{assumption}

\paragraph{Layer 1 (joint TV bound).}
For any test function $f$ with $|f| \leq B \cdot |E|$ and any two probability measures $P, Q$ on the joint sample space,
\[
\big|\mathbb{E}_P f - \mathbb{E}_Q f\big| \;\leq\; 2 B \cdot |E| \cdot \mathrm{TV}(P, Q).
\]
Applied to $f = \sum_{i \in E} -\log p_\theta(\tau(z_i, \xstar_i) \mid X)_i$, with $q^{(\mathrm{unif})}$ and $q^{(\mathrm{prac})}$ the joint proposals under $\nu_{\mathrm{unif}}$ and $\nu^{(\mathrm{prac})}$ respectively:
\[
\big|\Ltrain^{(\mathrm{unif})}(\theta) - \Ltrain^{(\mathrm{prac})}(\theta)\big| \;\leq\; 2B \cdot |E| \cdot \mathbb{E}_{(\xstar, c) \sim \mathcal{D}}\!\left[\mathrm{TV}\!\big(q^{(\mathrm{unif})}(\cdot \mid \xstar, c),\; q^{(\mathrm{prac})}(\cdot \mid \xstar, c)\big)\right].
\]

\paragraph{Layer 2 (factorize the joint TV).}
The two proposals share the rule prior $\pi_\rho$, the boundary law $\mathcal{B}$, and the timestep distribution; they differ only in the conditional corruption at $\rho = \mathrm{wrong}$ positions. Define the rule-marginalized corruption $\tilde{\nu}_i(\cdot \mid \xstar_i) := \pi_{\mathrm{wrong}} \cdot \nu_{\mathrm{wrong}}(\cdot \mid \xstar_i) + \pi_{\mathrm{mask}} \cdot \delta_{\mask}$. The proposals agree on the $\pi_{\mathrm{mask}}$ branch, so by the mixture-TV identity $\mathrm{TV}(\alpha P_1 + (1-\alpha) P_2,\, \alpha Q_1 + (1-\alpha) P_2) = \alpha \,\mathrm{TV}(P_1, Q_1)$, the per-position TV between rule-marginalized corruptions equals $\pi_{\mathrm{wrong}} \cdot \mathrm{TV}(\nu^{(\mathrm{unif})}_{\mathrm{wrong}, i}, \nu^{(\mathrm{prac})}_{\mathrm{wrong}, i})$. Combining with the standard product-distribution TV inequality (independence across positions),
\[
\mathrm{TV}\!\big(q^{(\mathrm{unif})},\, q^{(\mathrm{prac})}\big) \;\leq\; \pi_{\mathrm{wrong}} \cdot \sum_{i \in E} \mathrm{TV}\!\big(\nu^{(\mathrm{unif})}_{\mathrm{wrong}, i},\; \nu^{(\mathrm{prac})}_{\mathrm{wrong}, i}\big).
\]

\paragraph{Layer 3 (per-position bound, ready for analysis).}
Combining Layers 1 and 2:
\begin{equation}
\big|\Ltrain^{(\mathrm{unif})}(\theta) - \Ltrain^{(\mathrm{prac})}(\theta)\big| \;\leq\; 2B \cdot |E| \cdot \pi_{\mathrm{wrong}} \cdot \mathbb{E}_{(\xstar, c) \sim \mathcal{D}}\!\left[\sum_{i \in E} \TV{\nu^{(\mathrm{unif})}_{\mathrm{wrong}, i}}{\nu^{(\mathrm{prac})}_{\mathrm{wrong}, i}}\right].
\label{eq:topk-tv-bound}
\end{equation}
Only "wrong"-rule positions contribute; the mask-rule and non-edit branches contribute zero.

\paragraph{Implication.}
Theorem~\ref{thm:bayes-consistency} still applies on the support of $q^{(\mathrm{prac})}$: the population minimizer recovers the conditional label distribution under $\nu^{(\mathrm{prac})}$ rather than under $\nu_{\mathrm{unif}}$. Since inference-time errors are not uniform, $\nu^{(\mathrm{prac})}$ is the more relevant covered region.

\section{Engineering Implementation}
\label{app:rope-engineering}

The relative-lag form of \S\ref{sec:temporal-residual} admits a strict $O(1)$ per-step update of the accumulated embedding $a_i^{(t)}$ (also written $\mathrm{acc}_i$).

\paragraph{Derivation of the recurrence.}
Starting from the closed form (Eq.~\eqref{eq:temporal-embedding})
\[
a_i^{(t)} \;=\; \sum_{k=0}^{t} \gamma^{\,t-k}\, R_{k-t}\, e_i^{(k)},
\qquad
e_i^{(k)} := \mathrm{wte}(\tilde{x}_i^{(k)}),
\]
and the corresponding sum at step $t-1$,
\[
a_i^{(t-1)} \;=\; \sum_{k=0}^{t-1} \gamma^{\,t-1-k}\, R_{k-(t-1)}\, e_i^{(k)},
\]
left-multiplying $a_i^{(t-1)}$ by $\gamma R_{-1}$ and using the rotation composition rule $R_{-1} R_{k-(t-1)} = R_{k-t}$ gives
\[
\gamma\, R_{-1}\, a_i^{(t-1)} \;=\; \sum_{k=0}^{t-1} \gamma^{\,t-k}\, R_{k-t}\, e_i^{(k)}.
\]
Adding the current-step contribution $e_i^{(t)}$ (whose lag from the current step is zero, so $R_0 = I$) recovers $a_i^{(t)}$:
\[
a_i^{(t)} \;=\; e_i^{(t)} \;+\; \gamma\, R_{-1}\, a_i^{(t-1)}.
\]
This is the recurrence used in the implementation.

\paragraph{Memory layout.}
We maintain a single running tensor $a \in \mathbb{R}^{N \times d}$, where $N$ is the sequence length and $d$ the hidden width. The per-step update is one $N \times d$ block-diagonal rotation followed by an element-wise add; storing the full token-level trajectory would cost $O(T \times N \times d)$, while the recurrence collapses this to a single $O(N \times d)$ buffer.

\paragraph{Numerical stability.}
Each $R_\Delta$ is a block-diagonal orthogonal transform, so $R_{-1}$ preserves norms: $\|R_{-1}\, a_i^{(t-1)}\| = \|a_i^{(t-1)}\|$. Combined with the decay factor $\gamma \in (0, 1]$, the running sum's norm grows at most linearly in $t$.

\paragraph{Contrast with an absolute-step formulation.}
A natural alternative assigns each historical token an absolute-step rotation in a fixed (step-zero) reference frame, $\sum_{k=0}^{t} \gamma^{t-k} R_k\, e_i^{(k)}$. This admits the same per-step recurrence but anchors the trajectory at the start of denoising rather than the current step. The relative-lag form of Eq.~\eqref{eq:temporal-embedding} instead anchors the phase at the model's most recent input, and is what we use in all reported experiments.

\paragraph{Loss reweighting.}
Training launchers may attach per-position weights $w_i$ to the CE term. The math launcher uses constant weights; the Sudoku launcher emphasizes edit positions via an $X$-measurable edit-vs-non-edit weight. Both fall under the loss-reweighting remark following Theorem~\ref{thm:bayes-consistency}, so the population minimizer is unchanged.

\paragraph{Bounded-loss floor.}
We clip output probabilities at $\varepsilon = 10^{-8}$, giving $B = -\log\varepsilon \approx 18$ in Assumption~\ref{ass:bounded-rm}.

\section{Training Algorithm}
\label{app:training-algo}

We give the complete per-sample procedure summarized in Section~\ref{sec:training}. Let $T = 6$ denote the trajectory length used in our experiments.

\begin{enumerate}
    \item \textbf{Sample data.} Draw $(\xstar, c) \sim \mathcal{D}$ (target sequence and task condition).

    \item \textbf{Sample per-position rule.} For each $i \in E$, draw $\rho_i \sim \pi_\rho$ over $\{\mathrm{wrong}, \mathrm{mask}\}$ independently. Non-edit positions $i \notin E$ are deterministically labeled "keep"; their state is $\xstar_i$ at every $t$.

    \item \textbf{Sample source token (wrong-rule positions only).} For each $i \in E$ with $\rho_i = \mathrm{wrong}$, sample $w_i \sim \nu_{\mathrm{wrong}}(\cdot \mid \xstar_i)$ on $\mathcal{V} \setminus \{\xstar_i\}$. The rigorous baseline of Appendix~\ref{app:bayes-consistency} uses uniform sampling; the practical variant (Appendix~\ref{app:topk-tv-gap}) uses a task-specific non-uniform $\nu^{(\mathrm{prac})}$. Mask-rule positions ($\rho_i = \mathrm{mask}$) need no source-token sample; they enter the trajectory directly at the $\mask$ state.

    \item \textbf{Sample boundaries.} For each $\rho_i = \mathrm{wrong}$ position, draw $\beta_i \sim \mathrm{Uniform}\{1, \ldots, T-1\}$, then $\mu_i \mid \beta_i \sim \mathrm{Uniform}\{\beta_i + 1, \ldots, T\}$. For each $\rho_i = \mathrm{mask}$ position, draw $\mu_i \sim \mathrm{Uniform}\{1, \ldots, T\}$.

    \item \textbf{Construct the trajectory $\tilde{x}^{(0:T)}$.} Per-position deterministic from the boundaries:
    \begin{itemize}
        \item "wrong" rule: $\tilde{x}^{(t)}_i = w_i$ for $0 \leq t < \beta_i$, $\mask$ for $\beta_i \leq t < \mu_i$, $\xstar_i$ for $\mu_i \leq t \leq T$.
        \item "mask" rule: $\tilde{x}^{(t)}_i = \mask$ for $0 \leq t < \mu_i$, $\xstar_i$ for $\mu_i \leq t \leq T$.
        \item "keep" / non-edit: $\tilde{x}^{(t)}_i = \xstar_i$ for all $t$.
    \end{itemize}
    The wrong-rule ordering $w_i \to \mask \to \xstar_i$ and the mask-rule ordering $\mask \to \xstar_i$ both hold by construction; no post-processing is required.

    \item \textbf{Sample current step.} Draw $t \sim \mathrm{Uniform}\{0, \ldots, T-1\}$ and set $z = \tilde{x}^{(t)}$, $H = \tilde{x}^{(<t)}$.

    \item \textbf{Construct labels.} For each $i \in E$, the per-position label is the oracle action $y_i = \tau(z_i, \xstar_i)$ defined in Eq.~\eqref{eq:oracle-tau}: $\mask$ if $z_i$ is a visible non-target token, $\xstar_i$ if $z_i = \mask$ (reveal target), $\xstar_i$ if $z_i = \xstar_i$ (preserve target).

    \item \textbf{CFG dropout (image editing only).} Draw $\text{text\_drop} \sim \mathrm{Bernoulli}(0.10)$ and $\text{image\_drop} \sim \mathrm{Bernoulli}(0.10)$ independently. If $\text{text\_drop}$, replace the user instruction with its unconditional prefix. If $\text{image\_drop}$, mark image-content positions to have their accumulated embedding overwritten by $\mathrm{wte}(\mask)$ after RMSNorm in step~9. The generic algorithm (text generation, Sudoku) skips this step.

    \item \textbf{Compute the accumulated embedding.} Accumulate via Eq.~\eqref{eq:temporal-embedding},
    \[
    \mathrm{acc}_i \;=\; \textstyle\sum_{k=0}^{t} \gamma^{\,t-k}\, R_{k-t}\, \mathrm{wte}(\tilde{x}_i^{(k)}),
    \]
    or by the recurrence $\mathrm{acc}_i^{(s)} = \mathrm{wte}(\tilde{x}_i^{(s)}) + \gamma R_{-1} \mathrm{acc}_i^{(s-1)}$ with $\mathrm{acc}_i^{(0)} = \mathrm{wte}(\tilde{x}_i^{(0)})$. Apply RMSNorm. For image editing, if $\text{image\_drop}$ from step~8 fires, overwrite $\mathrm{acc}$ at image-content positions with $\mathrm{wte}(\mask)$. The model's input is $(\mathrm{acc}, c)$; the pipeline keeps $z$ aside for label construction and for the analyst-side conditioning of Theorem~\ref{thm:bayes-consistency}.

    \item \textbf{Compute the loss.} Cross-entropy of Eq.~\eqref{eq:total-loss} against the per-position labels $y_i$ from step~7.
\end{enumerate}

\paragraph{Practical top-$k$ proposal.}
For text generation, $\nu^{(\mathrm{prac})}_{\mathrm{wrong}}(\cdot \mid \xstar_i)$ in step~3 is the truncated softmax of a frozen checkpoint $\theta_0$'s top-$k$ output (default $k = 50$), restricted to $\mathcal{V} \setminus \{\xstar_i\}$; sharper conditionals admit smaller $k$. We probe $\theta_0$ once per training sample on $\xstar$ and cache the per-position top-$k$ index sets, drawing each position's wrong token from its own conditional softmax. Since $\theta_0$ is fixed, the proposal is independent of the live parameter $\theta$. Appendix~\ref{app:topk-tv-gap} bounds the objective shift relative to the uniform baseline.

\paragraph{Auxiliary losses (image editing only).}
In addition to the primary cross-entropy loss of Eq.~\eqref{eq:total-loss}, the image-editing pipeline uses an unlikelihood loss on non-edit positions~\cite{welleck2019neural} and a stage-2 ordering loss; their detailed form is deferred to a separate technical companion. These are heuristic regularizers outside the scope of Theorem~\ref{thm:bayes-consistency}.

\input{body/appendix_parts/Image_appendix}

\input{body/appendix_parts/text_appendix}

%% file: body/appendix_parts/Image_appendix.tex
\section{Results gallery on image editing}
\label{app:Image-Gallery}
\input{figures/Image_appendix}

Fig.~\ref{app_img:Image_Result_Gallery} provides additional image editing examples spanning object replacement, attribute modification, object insertion, and localized scene editing. Each row shows the source image, the predicted mask, our edit, and Lumina / Lumina-SFT baselines. The brighter regions in pixel-wise difference heat maps indicates larger pixel change relative to the source. Across these examples, our edits concentrate within the predicted-mask region; the baselines exhibit pixel changes outside that region (visible as off-mask hot regions in the heat maps). For readability, figures use abbreviated editing instructions; Tab.~\ref{tab:image_prompt_mapping} maps these prompts to the full prompts.

\input{tables/formal/instruction}

%% file: figures/Image_appendix.tex
\begin{figure}[htbp]
\centering
\includegraphics[width=\linewidth]{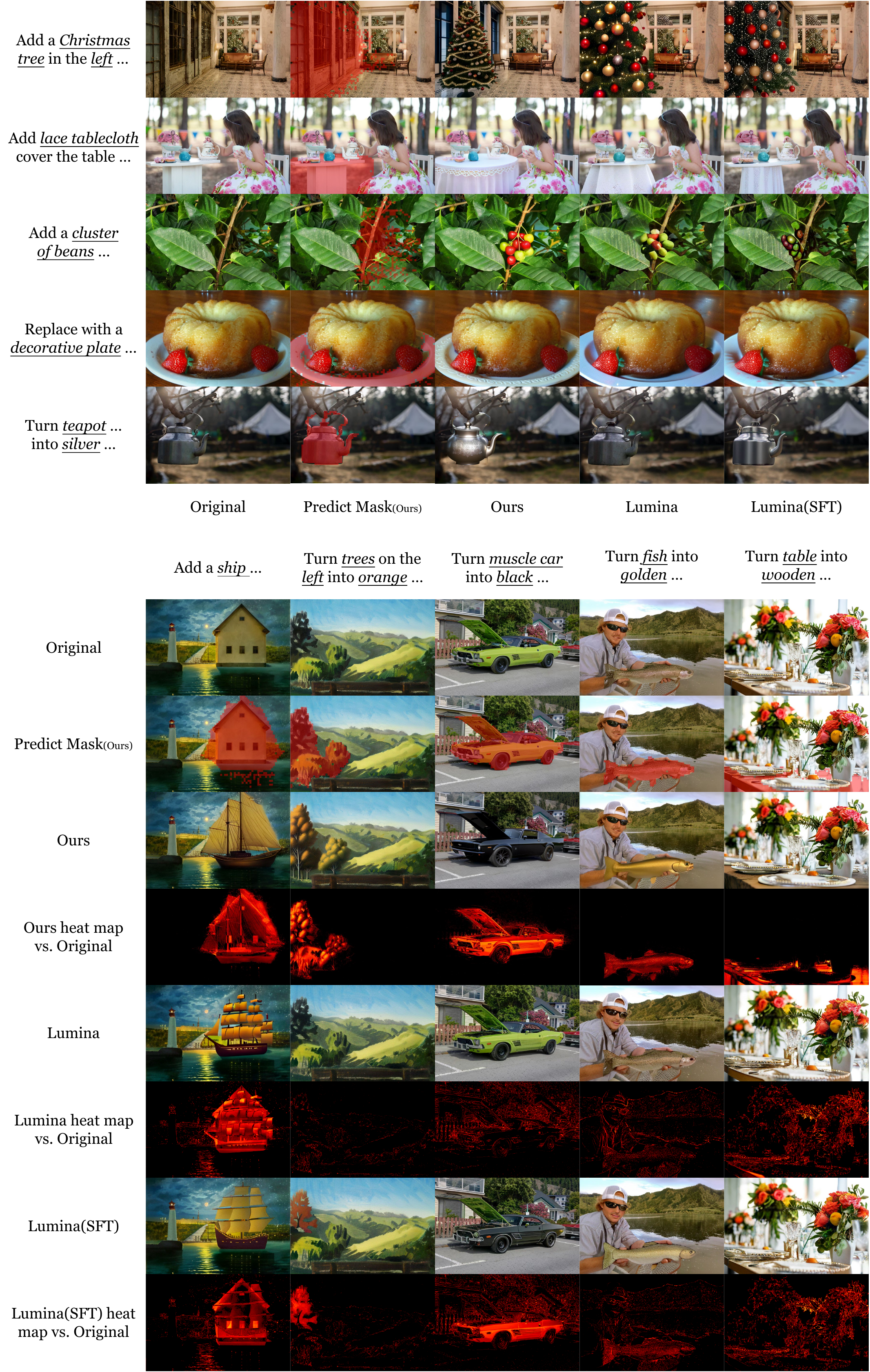}
\caption{
\textbf{Additional qualitative results on image editing.}
}
\label{app_img:Image_Result_Gallery}
\end{figure}

%% file: tables/formal/instruction.tex
\begin{table*}[htbp]
\centering
\small
\caption{
Correspondence between the abbreviated editing instructions shown in the paper and the full editing instructions used in our experiments.
}
\setlength{\tabcolsep}{4pt}
\renewcommand{\arraystretch}{1.18}
\begin{tabularx}{\textwidth}{p{0.35\textwidth} X}
\toprule
\textbf{Abbreviated Editing Instruction} 
& \multicolumn{1}{c}{\textbf{Full Editing Instruction}} 
\\
\midrule

Turn time into black ...
& Turn time positioned in the right area into black.
\\
\midrule

Add a person in wedding gown ...
& Add a person in an elaborate white wedding gown with lace, veil, and tiara in the upper-middle of the image, occupying roughly half the height and most of the width.
\\
\midrule

Add a cabin made from wood ...
& Add a modern, minimalist cabin made from light-colored wood with a single door glowing orange, located in the upper-right portion and taking up nearly a quarter of the image.
\\
\midrule

Turn cookies into chocolate ...
& Turn cake positioned in the central area into chocolate.
\\
\midrule

Turn trees on the left into orange ...
& Turn trees positioned on the left side into autumnal orange.
\\
\midrule

Add bell-shaped blooms
& Add delicate, white bell-shaped blooms with slender green stems in the middle-lower portion of the image, occupying a large area spanning most of the width.
\\
\midrule

Add cream piled on the top ...
& Add whipped cream piled on top of the French toast stack near the center-upper area of the image, covering about one sixth of the total image area.
\\
\midrule

Add creamy ... filling a spoon ...
& Add a white, creamy substance, likely coconut oil, filling a spoon and partially spilling onto a jar in the center-right-middle of the image, occupying about one-fifth of the area.
\\
\midrule

Turn the tree into snowy ...
& Turn tree positioned in the right area into snowy.
\\
\midrule

Add a Christmas tree in the left ...
& Add a beautifully decorated Christmas tree covered with red, gold, and silver ornaments and sparkling white lights in the left foreground, occupying most of the left and lower-central area.
\\
\midrule

Add lace tablecloth cover the table ...
& Add a small, round table covered with a lace tablecloth in the lower-left-middle of the image, occupying about one fourth of the area.
\\
\midrule

Add a cluster of beans ...
& Add a cluster of ripening beans in various colors from green to red in the center-right of the image, occupying approximately one fourth of the area.
\\
\midrule

Replace with a decorative plate ...
& Add a decorative white plate on a wooden table in the bottom-middle of the image, occupying about one third of the area.
\\
\midrule

Turn teapot into silver ...
& Turn teapot positioned in the upper-central area into silver.
\\
\midrule

Add a ship ...
& Add a large, fully-rigged sailing ship with golden sails and intricate rigging near the center-upper-right, occupying about one third of the image area.
\\
\midrule

Turn muscle car into black ...
& Turn muscle car positioned in the central area into black.
\\
\midrule

Turn fish into golden ...
& Turn fish positioned towards the lower central area into golden.
\\
\midrule

Turn table into wooden ...
& Turn table positioned in the lower area into wooden.
\\
\midrule

\bottomrule
\end{tabularx}

\label{tab:image_prompt_mapping}
\end{table*}

%% file: body/appendix_parts/text_appendix.tex
\section{More results on text reasoning task}
\label{app:Text-Cases}
\input{figures/text_case_appendix}

Fig.~\ref{app_text:Text_result_gallery} shows additional text revisions from our experiments. Cases 1 and 4: the model re-masks a token that is logically inconsistent with its surrounding context and re-predicts it. Cases 2 and 5: a single corrected token cascades downstream. After the earlier position is revised, the model also re-masks dependent positions further to the right and re-predicts them to be consistent with the corrected context. Case 3: format-level correction, where re-masking targets tokens whose surface form violates the answer template.

%% file: figures/text_case_appendix.tex
\begin{figure}[htbp]
\centering
\includegraphics[width=0.96\linewidth]{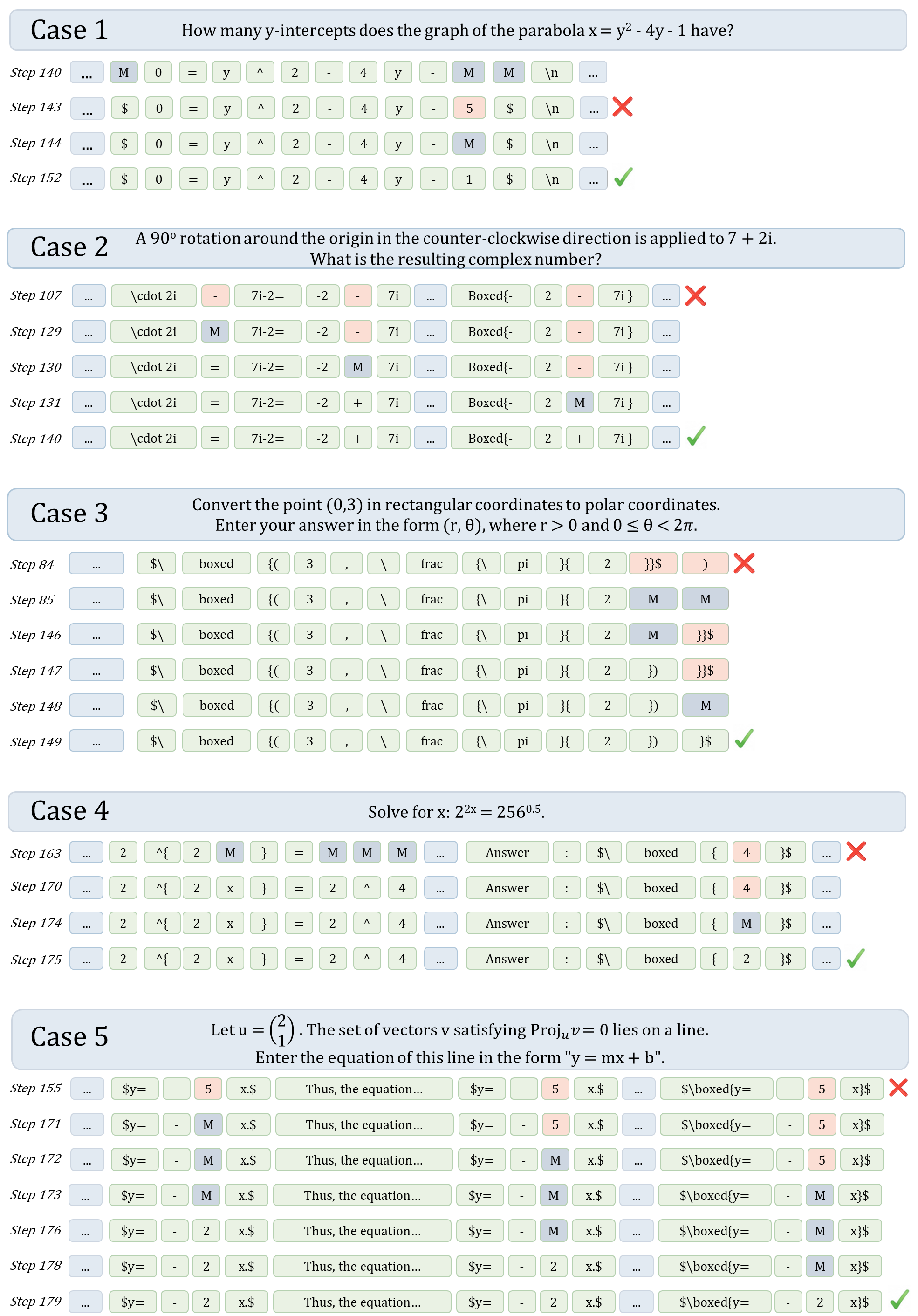}
\caption{
\textbf{Additional qualitative results on text generation.}
}
\label{app_text:Text_result_gallery}
\end{figure}